\documentclass{article} % For LaTeX2e
\usepackage{iclr2025_conference,times}

\usepackage{amsmath,amsfonts,bm}
\usepackage{physics}
\usepackage{amsmath}
\usepackage{tikz}
\usepackage{mathdots}
\usepackage{yhmath}
\usepackage{cancel}
\usepackage{color}
\usepackage{siunitx}
\usepackage{array}
\usepackage{multirow}
\usepackage{amssymb}
\usepackage{gensymb}
\usepackage{tabularx}
\usepackage{extarrows}
\usepackage{booktabs}
\usetikzlibrary{fadings}
\usetikzlibrary{patterns}
\usetikzlibrary{shadows.blur}
\usetikzlibrary{shapes}
\def\figref#1{Figure~\ref{#1}}
\def\twofigref#1#2{Figures \ref{#1} and \ref{#2}}
\def\quadfigref#1#2#3#4{Figures \ref{#1}, \ref{#2}, \ref{#3} and \ref{#4}}
\def\secref#1{Section~\ref{#1}}
\def\eqref#1{Equation~\ref{#1}}
\def\algref#1{Algorithm~\ref{#1}}
\def\1{\bm{1}}

\DeclareMathAlphabet{\mathsfit}{\encodingdefault}{\sfdefault}{m}{sl}
\SetMathAlphabet{\mathsfit}{bold}{\encodingdefault}{\sfdefault}{bx}{n}

\newcommand{\E}{\mathbb{E}}

\newcommand{\R}{\mathbb{R}}

\usepackage{hyperref}
\usepackage{url}
\usepackage{booktabs}
\usepackage{tabularray}
\usepackage{tabularx}
\usepackage{amsthm}
\usepackage{amssymb}
\usepackage{xspace}
\usepackage{graphicx}
\usepackage{caption}
\usepackage{subcaption}
\usepackage[ruled]{algorithm2e}
\usepackage{algorithmic}
\usepackage{thmtools, thm-restate}

\newtheorem{proposition}{Proposition}[section]

\theoremstyle{definition}
\newtheorem{definition}{Definition}[section]

\newcommand\ie{\emph{i.e.}}
\newcommand{\methodName}{LOOM-CFM\xspace}
\newcommand{\methodNameExtended}{Looking Out Of Minibatch\xspace}

\title{Faster Inference of Flow-Based Generative Models via Improved Data-Noise Coupling}
\author{%
Aram Davtyan \\
University of Bern \\
Bern, Switzerland \\
\texttt{aram.davtyan@unibe.ch}
\And
Leello Tadesse Dadi \\
EPFL \\
Lausanne, Switzerland \\
\texttt{leello.dadi@epfl.ch} \hspace{2.46cm}
\AND
Volkan Cevher \\
EPFL \\
Lausanne, Switzerland \\
\texttt{volkan.cevher@epfl.ch}
\And
Paolo Favaro \\
University of Bern \\
Bern, Switzerland \\
\texttt{paolo.favaro@unibe.ch}
}

\iclrfinalcopy % Uncomment for camera-ready version, but NOT for submission.
\begin{document}

\maketitle

\begin{abstract}
Conditional Flow Matching (CFM), a simulation-free method for training continuous normalizing flows, provides an efficient alternative to diffusion models for key tasks like image and video generation. The performance of CFM in solving these tasks depends on the way data is coupled with noise. A recent approach uses minibatch optimal transport (OT) to reassign noise-data pairs in each training step to streamline sampling trajectories and thus accelerate inference. However, its optimization is restricted to individual minibatches, limiting its effectiveness on large datasets. To address this shortcoming, we introduce \methodName (\methodNameExtended-CFM), a novel method to extend the scope of minibatch OT by preserving and optimizing these assignments across minibatches over training time. Our approach demonstrates consistent improvements in the sampling speed-quality trade-off across multiple datasets. \methodName also enhances distillation initialization and supports high-resolution synthesis in latent space training.
\end{abstract}

\section{Introduction}
\label{sec:intro}
% The impact of generative models on the current landscape of deep learning cannot be overstated. Beginning with the early advances in adversarial generative networks~\citep{goodfellow2020generative}, variational autoencoders~\citep{kingma2013auto}, and normalizing flows~\citep{papamakarios2021normalizing} and further propelled by the advent of diffusion models~\citep{ho2020denoising, dhariwal2021diffusion} and flow-based approaches~\citep{lipman2022flow, albergo2023stochastic}, the performance of generative AI has reached unprecedented heights in recent years. 
% Diffusion- and flow-based models, which we refer to as iterative methods in this paper, involve the gradual transformation of simple known noise distributions into complex data distributions through a sequence of small steps and have demonstrated remarkable capabilities in generating high-fidelity and diverse content. The high quality and training stability of these methods has led to them rapidly permeating nearly every area of continuous content creation, from image and video generation to motion and audio synthesis and beyond.

The high-quality outputs and training stability of modern diffusion~\citep{ho2020denoising, dhariwal2021diffusion} and flow-based models~\citep{lipman2022flow, albergo2023stochastic}, or more generally \emph{iterative denoising methods}, have led to their rapid adoption in nearly every area of content creation, from image~\citep{esser2024scaling} and video generation~\citep{davtyan2023efficient} to motion~\citep{hu2023motion} and audio synthesis~\citep{guan2024lafma}. 

Despite the superior performance of iterative methods, they require multiple evaluations of the underlying model during inference to generate content. This requirement stems from the gradual transformation of the initial sample into the desired data point, resulting in slower operation compared to single-pass methods like GANs \citep{goodfellow2020generative}. 

To mitigate this drawback, recent research has focused on expediting the generation process through various strategies, including enhanced training techniques \citep{lee2023minimizing, bartosh2024neural}, model distillation \citep{luhman2021knowledge, song2023consistency, liu2023flow, salimans2022progressive}, and sampling modifications \citep{dhariwal2021diffusion, lu2022dpm, shaul2024bespoke}. Specifically, iterative models employing a probability flow ordinary differential equation (ODE) framework are promising, as minimizing the curvature of their generative trajectories can significantly reduce the number of required network evaluations, thereby accelerating sampling.

This paper explores the Conditional Flow Matching (CFM) framework and its derivatives \citep{lipman2022flow, albergo2023stochastic, liu2023flow, tong2023conditional}. CFM provides a straightforward, efficient, and versatile training method that is not dependent on the initial noise distribution. Although data and noise are typically sampled independently, the joint distribution of data-noise pairs, or data-noise coupling, significantly impacts the curvature of sampling trajectories in CFM-trained models \citep{lee2023minimizing}. Sampling these pairs most effectively follows the Optimal Transport (OT) plan \citep{villani2009optimal} between the noise and data distributions \citep{liu2023flow}. However, computing the OT plan for large datasets is not feasible, and while minibatch OT methods offer an approximation during training \citep{tong2023conditional, pooladian2023multisample}, their effectiveness decreases with increasing dataset size.

\if 0
% To address these limitations, we propose a method to improve the effectiveness of minibatch OT-based CFM. The key idea is to store the noise-data assignments and to improve this coupling throughout the training process by reassigning them within each new minibatch. This allows different minibatches to implicitly communicate the local OT assignments with each other and leads to a better approximation of the global OT plan. To prevent overfitting of our model to the fixed noise-data assignments, we additionally propose to allocate multiple noise instances per data point and sample one of those at random at each training step. We call our method \methodName for \methodNameExtended. We provide a convergence analysis of our method and experimentally show the superior performance of \methodName compared to prior work on standard benchmarks. In addition, we show that \methodName provides a better initialization for distillation to further speed up the inference time and is also compatible with latent flow matching for higher resolution generation. To summarize, our contributions are as follows:
\fi 

To overcome these limitations, we introduce a method that enhances the effectiveness of minibatch OT-based CFM. Central to our approach is the preservation and iterative refinement of noise-data pairings within each minibatch, facilitating implicit communication of local OT assignments across different minibatches, thus achieving a more accurate approximation of the global OT plan. To avoid model overfitting to static noise-data assignments, we further propose assigning multiple noise instances to each data point, selecting one randomly during each training step. We refer to our approach as \methodName, standing for \methodNameExtended-CFM. We provide a convergence analysis and demonstrate through experiments that \methodName outperforms existing methods on standard benchmarks. Additionally, \methodName serves as an effective initialization for model distillation, further enhancing inference speed, and is compatible with latent flow matching for generating higher-resolution outputs. Our contributions can be summarized as follows:

\begin{itemize}
    \item We introduce \methodName \--- a novel iterative algorithm to boost the generation speed and accuracy of CFMs by optimizing the global data-noise assignments of minibatch OT; % by locally updating fixed data-noise assignments;
    \item We prevent overfitting of  fixed data-noise assignments at no computational cost by allocating multiple noise samples per data point (which artificially increases  the dataset size);
    \item We present a convergence analysis of \methodName;
    \item We evaluate \methodName extensively and demonstrate its superior performance over prior work. Specifically, \methodName reduces the FID with 12 NFE by 41\% on CIFAR10, 46\% on ImageNet-32, and 54\% on ImageNet-64 compared to minibatch OT methods.
\end{itemize}

\section{Background}
\label{sec:background}
In this section, for completeness, we first introduce the required background and the prior work, which include CFM (in \secref{sec:cfm}), the existing approaches to speeding up its inference by straightening the sampling paths (in \secref{sec:couplings}) and  optimal transport (in \secref{sec:ot}). We discuss the issues and the drawbacks of the existing methods and then explain our approach in \secref{sec:lbm}.

\subsection{Conditional Flow Matching}\label{sec:cfm}

Generative modelling requires estimating the unknown target data distribution $p(x)$ with some parametric model $p_\theta (x)$. A conventional choice for this estimator is $p_\theta(x) = \int p_\theta (x | z) p(z)\,dz$, where $p(z)$ is a given source noise distribution (typically the standard normal distribution) and $p_\theta (x | z)$ is a learned conditional noise-to-data distribution, or a generative distribution. Often the latter is considered to be deterministic, and in that case it can be written as $p_\theta(x | z) = \delta(x - g_\theta(z))$, where $g_\theta(z)$ is a deterministic mapping from noise to data, often referred to as the \emph{generator}, and $\delta(\cdot)$ is the Dirac delta distribution.

Recent ODE-based methods (such as denoising diffusion models~\citep{ho2020denoising, song2020denoising} or conditional flow matching~\citep{lipman2022flow, albergo2023stochastic, liu2023flow, tong2023conditional}) implicitly define the generator $g_\theta(z)$ through the following ODE
\begin{align}\label{eq:ode}
    \frac{d \phi(z, t)}{d t} &= v_\theta(\phi(z, t), t), \\
    \phi(z, 0) &= z,
\end{align}
where $v_\theta(y, t)$ is a vector field, parameterized with a neural network. $v_\theta(y, t)$ is trained in a way that $g_\theta(z) = \phi(z, 1)$ induces a valid approximation of the data distribution. One way to achieve this is by minimizing the following objective
\begin{align}
    {\cal L}_{\text{CFM}}(\theta) = \mathbb{E}_{x, z \sim p(x, z), t \sim U[0, 1], \varepsilon \sim {\cal N}(0, \sigma^2)} \left\| v_\theta(t x + (1 - t) z + \varepsilon, t) - (x - z) \right\|^2, \label{eq:cfm}
\end{align}

where $p(x, z)$ is some handcrafted coupling distribution with marginals equal to $p(x)$ and $p(z)$. A common choice of $p(x, z)$ is the so-called \emph{independent coupling} $p(x) p(z)$.

To sample from a model trained with the CFM objective, one needs to first draw a noise sample $z$ from the source distribution and then numerically solve the ODE to obtain $g_\theta(z)$. This requires discretizing the ODE and hence calling the neural network $v_\theta(y, t)$ multiple times, which can be costly and can slow down the generation. If the trajectories of the ODE had low curvature, fewer discretization steps would be needed in order to achieve the same accuracy.

In fact, despite the convex interpolation between a single noise and data sample pair in \eqref{eq:cfm}, being a straight line, the sampling trajectory of the ODE is actually far from straight. This can be seen even in a toy example, where the source and the target distributions are identical and are both standard normal Gaussians. Intuitively, the optimal vector field in such case should act as an identity. However, the sampling trajectories of a model trained in this setting first move towards the origin and then turn around and head back to the starting point (see \figref{fig:gaussian_to_gaussian_independent}). One might blame the convergence and numerical errors for such behavior. However, in this particular case there is a closed form analytical solution to the CFM optimization problem that is given by
\begin{align}
    v(y, t) = y \cdot \frac{2 t - 1}{\sigma^2 + t^2 + (1 - t)^2}.
\end{align}
This expression clearly explains the above observation (see Appendix~\ref{app:gtg} for the derivation). The reason why we get such a solution is the averaging in the loss function over all directions between the source and target samples (see \figref{fig:gaussian_to_gaussian_averaging}). This problem has already been noticed and pointed out in \citep{lee2023minimizing, esser2024scaling, lee2024improving}. Despite being just a toy example, the Gaussian to Gaussian case is actually important, as the conventional preprocessing in image generation involves normalizing images to ensure zero-mean and unit standard deviation.

\begin{figure}[t]
    \begin{subfigure}{0.3\textwidth}
        \includegraphics[width=\textwidth]{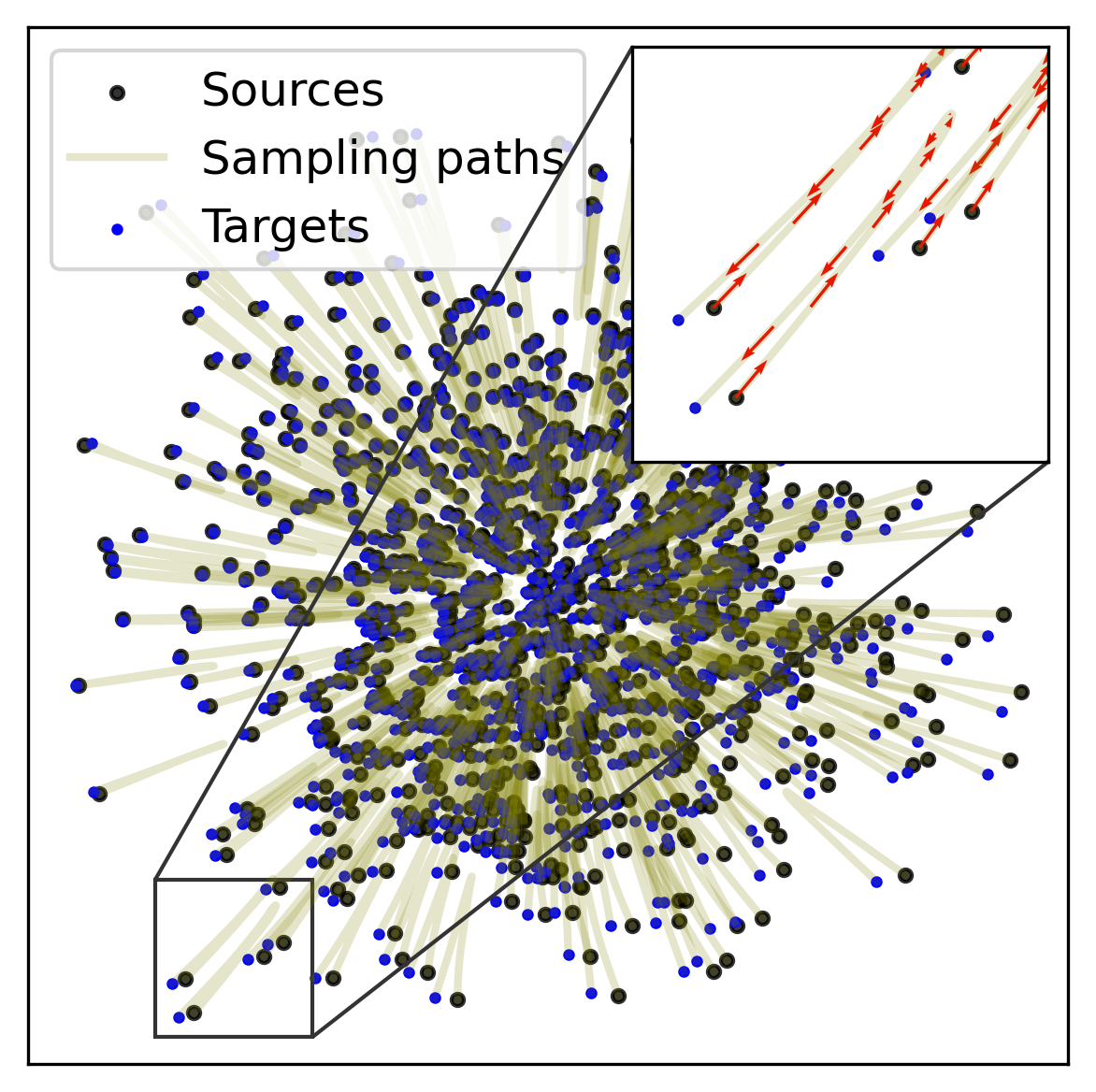}
        \caption{Independent coupling}
        \label{fig:gaussian_to_gaussian_independent}
    \end{subfigure}
    \hfill
    \begin{subfigure}{0.29\textwidth}
        \includegraphics[width=\textwidth, trim=20cm 5cm 20cm 5cm, clip]{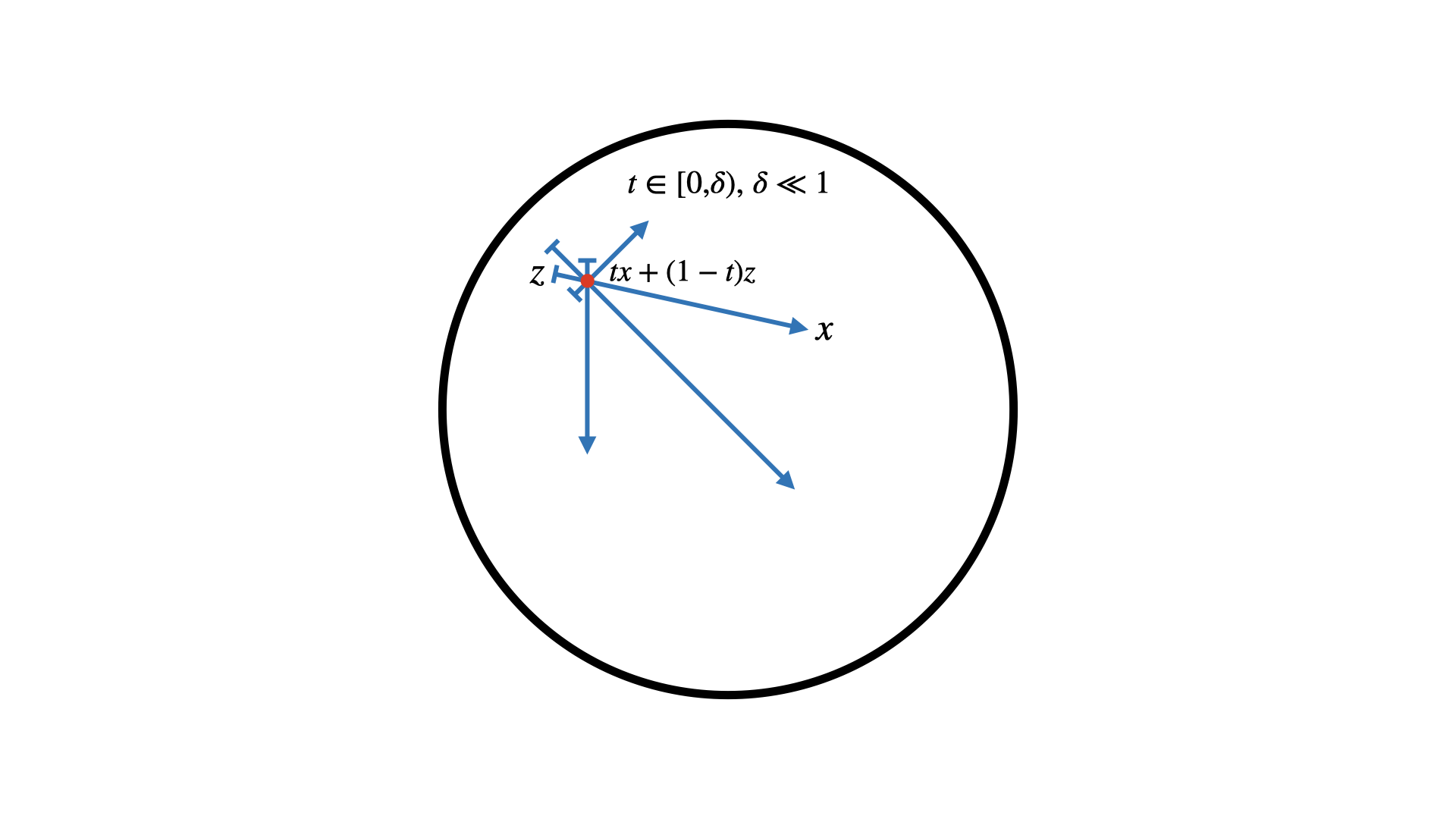}
        \caption{Vector field averaging}
        \label{fig:gaussian_to_gaussian_averaging}
    \end{subfigure}
    \hfill
    \begin{subfigure}{0.3\textwidth}
        \includegraphics[width=\textwidth]{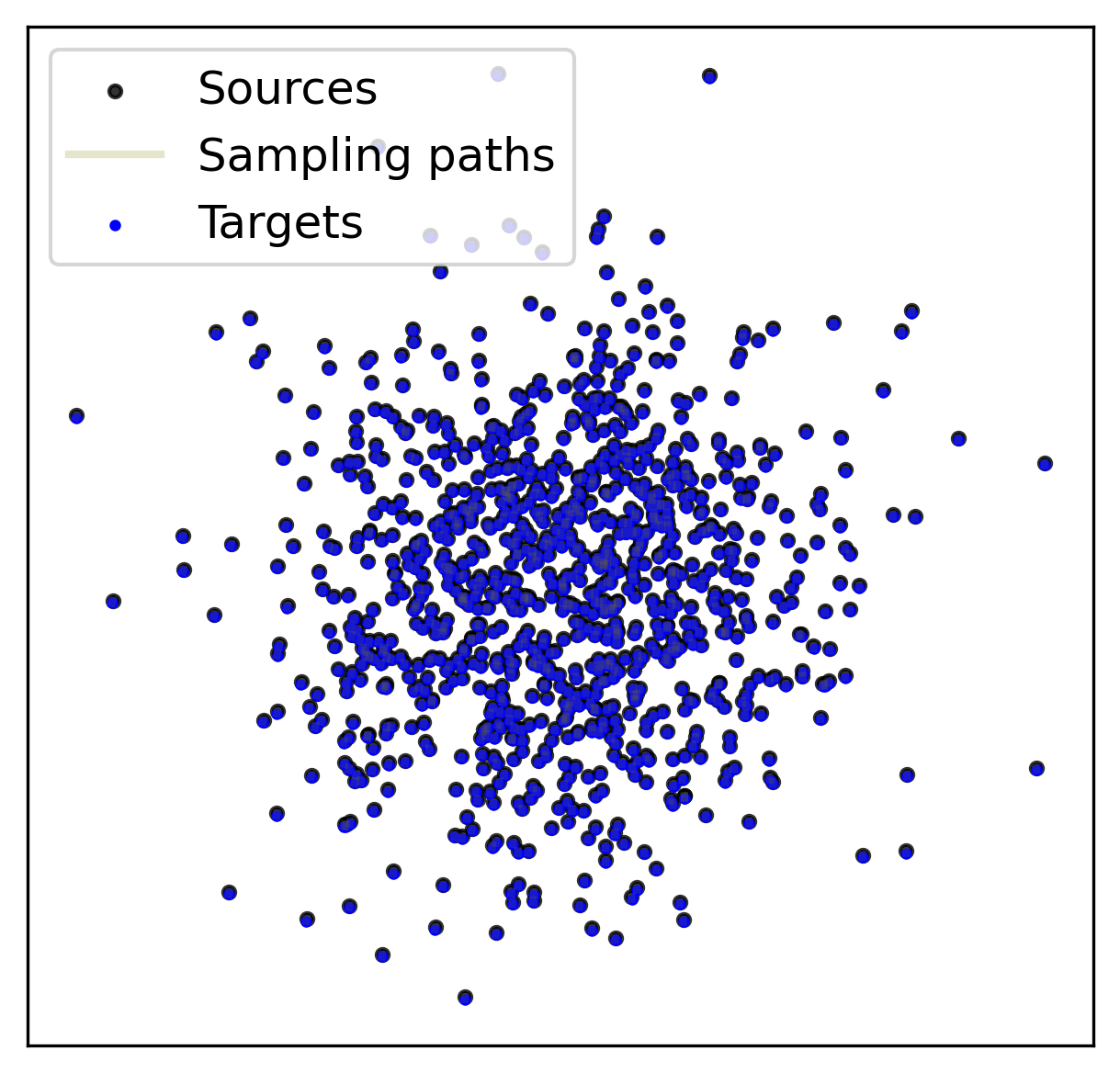}
        \caption{OT coupling}
        \label{fig:gaussian_to_gaussian_ot}
    \end{subfigure}
    \hfill
    \caption{2D Gaussian-to-Gaussian generation with CFM. This example illustrates the intuition behind the source of the curvature of the marginal sampling trajectories in the conditional flow matching framework and how optimal transport coupling could potentially solve this issue. (a) With independent coupling the sampling trajectories are highly curved. (b) This is due to the fact that in CFM the learned marginal vector field is an average over all source-to-target directions. As in the Gaussian case the distribution is symmetric around the origin, the learned vector field for small values of $t$ points towards the origin, since the target points are more likely to be sampled from the other side. (c) OT coupling resolves the curvature of the trajectories and results in the identity mapping via averaging only over the sources and targets that are close to each other.}
    \label{fig:gaussian_to_gaussian}
\end{figure}

\subsection{Faster Sampling and Data Couplings}
\label{sec:couplings}
Prior work has explored different approaches to speed up the integration of the ODE trajectories. One way to achieve faster generation is to reduce the number of integration steps and hence the number of function evaluations. In order to retain accuracy with fewer integration steps straighter sampling trajectories are desired. It has been shown that the straightness of the sampling paths is highly dependent on the number of criss-crosses of the data-to-noise interpolations that is induced by the coupling distribution $p(x, z)$~\citep{tong2023conditional, lee2023minimizing}. Hence, by only changing the coupling distribution, one can obtain straighter sampling trajectories of the learned marginal flow-field. In \citet{lee2023minimizing} the authors propose to model the coupling distribution as $p(x)p(z | x)$ and to also learn the encoder or the forward process $p(z | x)$. In \citet{bartosh2024neural} the forward process is also learned and since the coupling distribution is flexible, they can additionally impose explicit constraints on the straightness of the sampling paths. However, both approaches require training of auxiliary models and incorporate additional loss functions, which complicates the training overall. In this paper we focus instead on approaches that involve only changing the coupling distribution. It has been shown that the flow obtained after optimizing~\eqref{eq:cfm} induces a data coupling with a transport cost not larger than the cost of the initial coupling~\citep{liu2023flow}. More formally:
\begin{align}\label{eq:reflow}
    \mathbb{E}_{z \sim p(z)} [c(g_\theta(z), z)] \leq \mathbb{E}_{x, z \sim p(x, z)} [c(x, z)],
\end{align}
where $c(x, z)$ is an arbitrary convex function (\emph{e.g.}, $\|x - z\|^2$). Thus, using the induced coupling for the second training (the \textit{reflow} algorithm~\cite{liu2023flow}) tends to straighten the paths and lead to faster sampling. However, it comes with the burden of at least twice the training time. In~\citet{tong2023conditional} and~\citet{pooladian2023multisample} the authors noticed that one could instead use a coupling that is already optimal with respect to \eqref{eq:reflow} (see \figref{fig:gaussian_to_gaussian_ot}). Such a coupling is given by the so-called optimal transport plan (we introduce optimal transport in \secref{sec:ot}). Unfortunately, obtaining an exact optimal transport plan at the modern data scale is computationally infeasible. Therefore, in~\citet{tong2023conditional} and~\citet{pooladian2023multisample} the authors propose to approximate it via minibatch optimal transport. \citet{tong2023conditional} recover the soft permutation matrix with the Sinkhorn-Knopp alogirthm~\citep{sinkhorn} and sample from it as from the joint data-noise distribution, while \citet{pooladian2023multisample} calculate the hard permutation matrix with the Hungarian algorithm~\citep{kuhn1955hungarian} and reassign the data-noise pairs accordingly. However, the effectiveness of minibatch OT decreases with the growing sizes of the datasets. In contrast, our method, \methodName, is designed to improve minibatch OT by finding a better approximation to the global optimal transport plan via exchanging information across different minibatches. We describe our approach in \secref{sec:lbm}.

\subsection{Optimal Transport}
\label{sec:ot}
The optimal transport or Monge-Kantorovich problem refers to the search for an optimal coupling between two distributions $p(x)$ and $p(z)$, over $\mathcal{X}$ and $\mathcal{Z}$, with respect to a cost function $c(x, z)$ \citep{villani2009optimal}. Formally, a coupling between $p(x)$ and $p(z)$ is a distribution $p(x, z)$ over $\mathcal{X}\times\mathcal{Z}$  whose $x$ and $z$ marginals are exactly $p(x)$ and $p(z)$. In other words $\int_{\mathcal{Z}} p(x, z) dz = p(x)$ and $\int_{\mathcal{X}} p(x, z) dx = p(z)$. Among all such couplings, denoted $\Pi$, the solution to the following optimization problem
\begin{equation}
\inf_{p(x,z) \in \Pi} \E_{x,z \sim p(x, z)}[c(x, z)],
\label{eq:formalOT}
\end{equation}
is referred to as the \emph{optimal coupling}. Remarkably, the optimal coupling can be shown to be deterministic for distributions over $\R^d$ under minimal assumptions on $c$.  
Deterministic couplings $p(x, z)$ are those that can be expressed as the joint distribution of $(x, T(x))$ where $x \sim p(x)$ and $T: \mathcal{X} \rightarrow \mathcal{Z}$. The map $T$ \emph{transports} $x \sim p(x)$ to $T(x) \sim p(z)$. In our setting, solving \eqref{eq:formalOT} thus boils down to finding the optimal transport map $T$. In fact, further simplifications can be made since, in this paper, $p(x)$ will correspond to a uniform distribution over $n$ data samples, and $p(z)$ will be a uniform distribution over $n$ noise samples. Consequently, the problem considered in this paper is the optimal transport problem between two uniform distributions with finite support of equal size. 

Several algorithms exist for tackling this optimization task, see \citet{schrieber2016dotmark} for an extensive list. The problem can be cast as either a linear program or a graph-matching problem, and the most notable scheme for solving it is the Hungarian algorithm \citep{kuhn1955hungarian} which can be understood as either a primal-dual method for solving the linear program or an augmenting path approach to finding a minimum cost matching (see 3.6 in \citet{peyre2019computational}). The complexity of solving \eqref{eq:formalOT} between uniform distributions over $n$ points is $\mathcal{O}(n^3 \log(n))$. This complexity quickly becomes prohibitive for large-scale problems, which motivated the search for tractable approximations. The work of \citet{cuturi2013sinkhorn}, for instance, showed that with additional regularization of \ref{eq:formalOT}, the resulting surrogate optimization problem can be solved with a reduced $\mathcal{O}(n^2)$ complexity but remains out of reach for modern datasets. 

Alternative approximations, more relevant to our work, are randomized block-coordinate \citep{xie2024randomized} and mini-batch approaches (mOT) \citep{fatras2021minibatch} which operate on subproblems of size $m < n$.  The mini-batch approach, which is used in \citet{pooladian2023multisample}, consists of iteratively sampling a subset of $m$ data points and $m$ noise points independently at each iteration and determining the optimal coupling with a complexity of $\mathcal{O}(m^3)$. This scheme \emph{does not converge to the optimal coupling} but to a sub-optimal, non-deterministic, averaged coupling whose bias has been analyzed in \citet{sommerfeld2019optimal} and greatly refined in \citet{fatras2021minibatch}. On the other hand, the work of \citet{xie2024randomized} proposes a randomized coordinate selection scheme to solve the linear programming formulation of \ref{eq:formalOT} using a sequence of linear programs over a reduced number of variables. Convergence to optimality can be established as long as the subproblems are solved over $3n$ variables, which again remains prohibitively expensive. 

Our work can be seen as an intermediate between those two approaches. At each iteration, we sample $m$ data points and their \emph{corresponding} noise samples determined at the previous iteration, instead of the independently sampled noise points of \citet{fatras2021minibatch}. The complexity of our scheme does not exceed $\mathcal{O}(m^3)$ and is a block coordinate update scheme like \citet{xie2024randomized}. Our method converges to a stationary transport plan, instead of an averaged one in mOT, albeit to a sub-optimal one. We describe our method in more detail in the next section.

\section{Method}

In this section, we first introduce our approach to finding better data-to-noise couplings (\secref{sec:lbm}). Then, we propose a simple modification to the method to prevent overfitting to the fixed source (\secref{sec:multiple_caches}). Finally, we provide some implementation details (\secref{sec:implementation}).

\subsection{Looking Out Of the Minibatch}
\label{sec:lbm}
Suppose that we are given a set of training data points $\{x_i\}_{i \in \mathbb{N}_n}$ and noise samples $\{z_i\}_{i \in \mathbb{N}_n}$, where $\mathbb{N}_n = \{1, \dots, n\}$ and $x_i, z_i \in \mathbb{R}^d$. We aim to find a bijection $\tau^*: \mathbb{N}_n \rightarrow \mathbb{N}_n$ that represents an optimal coupling between the data and the noise sets. That is, we seek a permutation of $\mathbb{N}_n$ that satisfies
\begin{align}\label{eq:assignment}
    \tau^* = \arg\min_{\tau \in S_n} \sum_{i \in \mathbb{N}_n} c(x_i, z_{\tau(i)}),
\end{align}
where $c(x, z)$ is the transport cost between $x$ and $z$, and $S_n$ is the symmetric group (of degree $n$) of $\mathbb{N}_n$. In this paper, we consider the cost $c(x, z) = \|x - z\|_2$. Let $P_\tau$ be the permutation matrix corresponding to the mapping $\tau$ and let $X$ and $Z \in \mathbb{R}^{n \times d}$ be the data and noise matrices formed by stacking the corresponding samples row-wise. Then we can rewrite \eqref{eq:assignment} as:
\begin{align}
\label{eq:assignment_perm}
    P_{\tau^*} = \arg\min_{\tau \in S_n} \left\|X - P_\tau Z\right\|_{2,1},
\tag{OT}
\end{align}
where $\|\cdot\|_{2,1}$ denotes the sum of the Euclidean norms of the rows of a matrix.
Due to the finiteness of the feasible set, it is clear that a solution to the above problem always exists. Ideally, we would like to use $\tau^*$ as a data coupling to train CFM. That is, we would define the coupling distribution as
\begin{align}
    p_\tau(x, z) = \frac{1}{n}\sum_{i = 1}^n \delta_{x_i, z_{\tau(i)}}
\end{align}
and use it to optimize \eqref{eq:assignment}. However, algorithms to find $\tau^*$ usually require $O(n^3)$ running time (\emph{e.g.}, Hungarian algorithm~\citep{kuhn1955hungarian}), which makes their use infeasible at the deep learning data scale.  Existing work, such as \citep{tong2023conditional, pooladian2023multisample}, approximates $\tau^*$ with a minibatch version of \eqref{eq:assignment}. At each iteration, a new minibatch of noise is sampled and a local optimal transport is solved to assign noise samples to data within that minibatch. These assignments are used to perform the training step. Despite the simplicity of this method, it suffers from a limited scalability to larger datasets and data dimensionalities, as local assignments might be globally suboptimal. To address this issue, we propose an iterative procedure to better approximate the globally optimal assignment. First, $\{z_i\}_{i \in \mathbb{N}_n}$ are sampled and assigned to the corresponding $\{x_i\}_{i \in \mathbb{N}_n}$. At the beginning, $\tau_0$ is set to the trivial identity permutation, \ie, $\tau_0(i) = i, \forall i \in \mathbb{N}_n$. At the $k$-th training iteration, a minibatch of $\{x_{n_j}\}_{j = 1}^m$ and the correspoding $\{z_{\tau_{k-1}(n_j)}\}_{j = 1}^m$ is sampled from $p_{\tau_{k-1}}(x, z)$. $\tau_{k-1}$ is then locally updated to ensure the optimality of \eqref{eq:assignment} restricted to the minibatch. That is, we find another permutation $\omega_k$ that acts locally on the current minibatch (\ie, $\omega_k(i) = i, \,\text{if}\; i \notin \{\tau_{k-1}(n_j)\}_{j = 1}^m$) such that the following objective is minimized
\begin{align}\label{eq:assignment_local}
    \omega_k = \arg\min_{\omega \in S_m} \sum_{i = 1}^m c\left(x_{n_i}, z_{\omega(\tau_{k-1}(n_i))}\right).
\end{align}
Hence, the update for $\tau_{k - 1}$ takes the form:
\begin{align}
    \tau_k = \omega_k \circ \tau_{k - 1}.
\end{align}
Finally, the training step is performed using the updated assignments from \eqref{eq:assignment_local}, which are then saved for later iterations. Formally the method is described in Algorithm~\ref{alg:loom} and an illustration is provided in \figref{fig:loom}.

Notice that in contrast to the prior work, the solution to the local assignment problem is not thrown away after each step, but is instead used to update the global assignment and affects the future minibatches. This makes our approach strictly better than the prior work in terms of global optimality and approximation error. 

The proposed Algorithm~\ref{alg:loom} is guaranteed to converge to a stable solution.  The following theorem, whose proof can be found in Appendix~\ref{app:analysis}, characterizes precisely the convergence behavior of \methodName. The analysis is based on casting our scheme as a randomized cycle elimination method which converges to a stationary solution.
\begin{restatable}[Finite convergence]{theorem}{convergencethm}
     \methodName generates a sequence of assignments $\tau_k$ with non-increasing costs. With probability $1$ over the random batch selection, the iterates converge in a finite number of steps to a final assignment $\tau_{\text{final}}$ whose associated matching $M_{\text{final}}$ has no negative alternating cycles of length less than $m$. 
\end{restatable}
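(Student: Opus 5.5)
The plan is to treat the assignments $\tau_k$ as perfect matchings in the complete bipartite graph between $\{x_i\}$ and $\{z_j\}$, with edge weights $c(x_i,z_j)$. Write $C(\tau)=\sum_i c(x_i,z_{\tau(i)})$. Two matchings $\tau,\tau'$ differ by a disjoint union of alternating cycles, each of which alternately uses edges of $\tau$ and of $\tau'$. Accordingly, call an alternating cycle with respect to $M_\tau$ \emph{negative} if swapping along it strictly decreases the cost. Its \emph{length} is the number of data points it involves.

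\textbf{Step 1: monotonicity.} At step $k$, the identity $\omega=\mathrm{id}$ is feasible in \eqref{eq:assignment_local}, and $\omega_k$ only permutes noise indices inside the batch. Hence the cost restricted to the batch cannot increase, while the terms outside the batch are unchanged. This gives $C(\tau_k)\le C(\tau_{k-1})$.

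\textbf{Step 2: finite termination.} I will adopt the tie-breaking convention that $\omega_k=\mathrm{id}$ whenever the identity is already optimal on the batch. Every change of $\tau$ then strictly decreases $C$. Since $S_n$ is finite, $C$ takes finitely many values, so only finitely many changes can occur along any run. The remaining question is whether the run eventually stops changing, and what the stopping state looks like.

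\textbf{Step 3: characterizing stationary states.} I will show that $\tau$ is a fixed point for every possible batch if and only if $M_\tau$ has no negative alternating cycle supported on at most $m$ data points; this gives the statement's "length less than $m$" up to the convention on length.
\begin{itemize}
\item[($\Leftarrow$)] Suppose $\omega$ improves on some batch $B$. Then $\omega$ decomposes into disjoint cycles on $B$, and the cost decrease is the sum of the cycle contributions. So at least one cycle is negative, and it has length at most $|B|=m$.
\item[($\Rightarrow$)] Suppose a negative cycle on at most $m$ data points exists. Any batch containing those points admits an improvement, namely swapping along that cycle, so $\tau$ is not a fixed point for that batch.
\end{itemize}

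\textbf{Step 4: probability one.} Batches are drawn independently, and each $m$-subset is selected with probability at least $q=\binom{n}{m}^{-1}>0$. Suppose at some time the current $\tau$ is not stationary. Then some specific batch $B$ forces a strict decrease, and it is drawn at each subsequent step with probability at least $q$. The number of consecutive steps spent at a fixed non-stationary $\tau$ is therefore dominated by a geometric random variable, hence finite almost surely. By Step 2, at most $|S_n|$ strict decreases can occur, so the total number of steps before reaching a stationary $\tau_{\text{final}}$ is a finite sum of almost surely finite waiting times. After that, Step 3 guarantees that no batch changes $\tau$.

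\textbf{Main obstacle.} The delicate point is Step 3 together with the tie convention. Without the convention, the algorithm could oscillate among equal-cost optima, so I must state explicitly that the local solver keeps the current assignment when it is optimal. I would also need to match the length convention to the paper's wording "less than $m$" exactly. Everything else is routine.
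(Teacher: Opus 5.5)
Your proposal is correct and follows the same route as the paper's proof. The cost is non-increasing on the finite set $\{c(\tau):\tau\in S_n\}$, and a positive per-step probability of drawing a batch that contains a short negative alternating cycle rules out stalling at a matching that still has one. The paper phrases this second step via Borel--Cantelli after the stabilization time $K_{\text{final}}$; your geometric sojourn-time argument is cleaner, since $K_{\text{final}}$ is not a stopping time.

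Your tie-breaking convention is actually needed and patches a point the paper glosses over. Without it, only the costs provably stabilize, and equal-cost swaps could keep changing $\tau_k$ and destroy the cycle that the Borel--Cantelli step relies on. Your worry about the length convention is moot: excluding negative cycles on at most $m$ data points implies the stated bound under any way of counting length.
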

%This result, shows that the algorithm converges to a stationary solution. % Unlike the global optimum characterized in \ref{prop:globopt}, negative cycles of large lengths can still exist. The stationary solution is thus not guaranteed to be globally optimal.
Like all tractable approximation schemes, our algorithm is not guaranteed to recover the optimal coupling. However, unlike the minibatch OT schemes, our method outputs a deterministic coupling, and more importantly, we show in \secref{sec:image_generation} that it induces a significantly better straightening of sampling paths.

\begin{figure}[t]
\noindent
\begin{minipage}[t]{.55\linewidth}
    \vspace{0pt}  % Ensures the vertical alignment is top
    \centering
    \includegraphics[width=\linewidth, trim=5cm 6cm 2cm 3.2cm, clip]{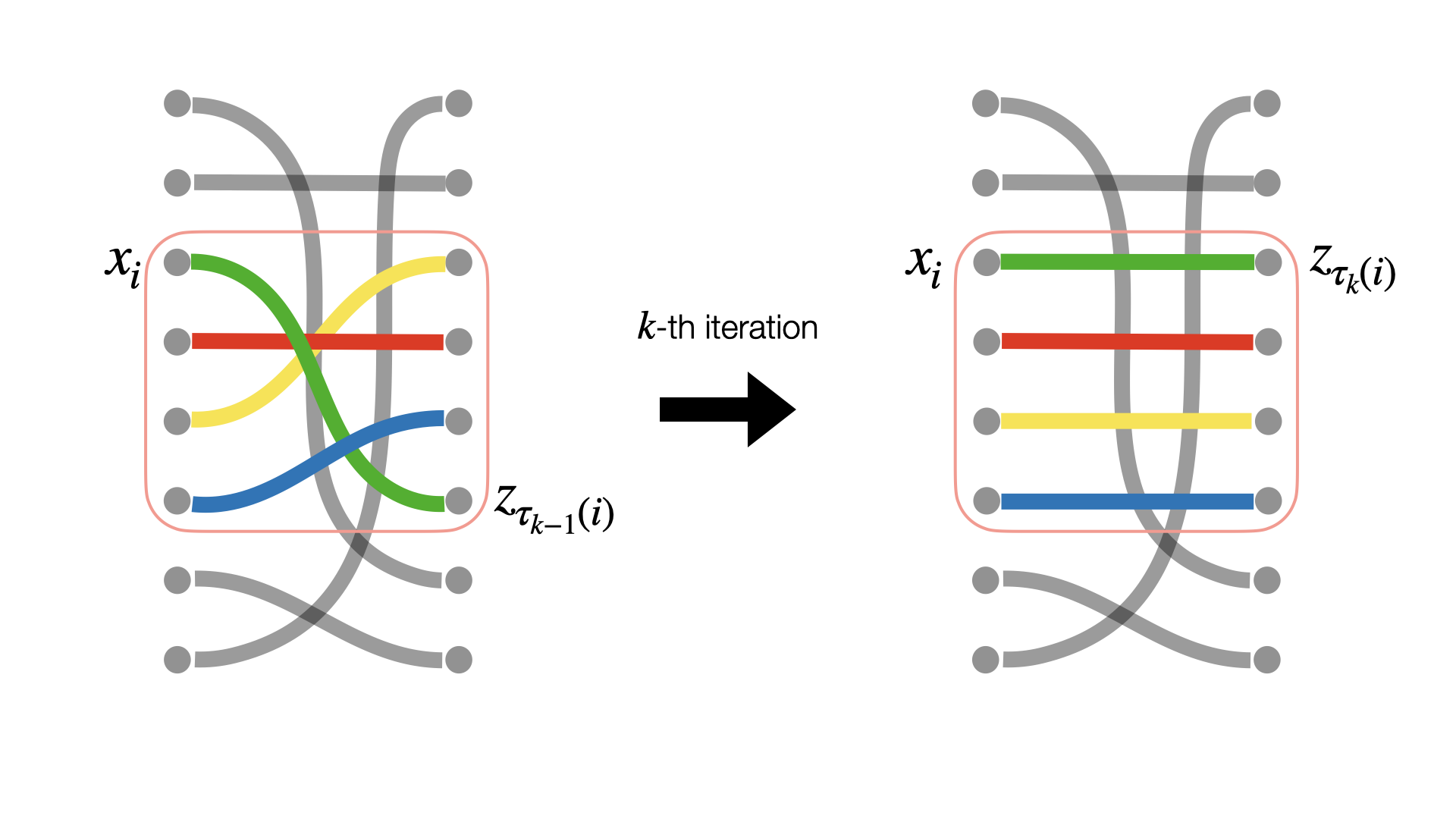}
    \captionof{figure}{At each iteration a random minibatch is sampled according to the current data-to-noise assignments. Prior to taking the gradient step the current assignment is locally updated to ensure optimality w.r.t. the current minibatch. This process resembles weaving on a loom, hence the name of the method \methodName, or \methodNameExtended CFM.}
    \label{fig:loom}
\end{minipage}%
\hfill % Adds space between the two minipages
\begin{minipage}[t]{.4\linewidth}
    \vspace{0pt}  % Ensures the vertical alignment is top
    \begin{algorithm}[H]
        \caption{\methodName}\label{alg:loom}
        \begin{algorithmic}[1]
            \STATE \textbf{Input:} \\ Set of data points $\{x_i\}_{i \in \mathbb{N}_n}$, \\ Set of noise samples $\{z_i\}_{i \in \mathbb{N}_n}$, \\ Initial assignment $\tau_0 = \text{Id}$;
            \FOR{$k$ in range(1, $T$)}
            \STATE Sample minibatch \\$\{x_{n_j}$, $z_{\tau_{k - 1}(n_j)}\}_{j = 1}^m$ $\sim$ $p_{\tau_{k-1}}(x, z)$;
            \STATE Calculate $\omega_k$ as in \eqref{eq:assignment_local};
            \STATE Update $\tau_k \leftarrow \omega_k \circ \tau_{k - 1}$;
            \STATE Take a gradient descent update w.r.t. \eqref{eq:assignment} on \\$\{x_{n_j}$, $z_{\tau_k(n_j))}\}_{j=1}^m$;
            \ENDFOR
            \STATE \textbf{Return:} $\tau_T$
        \end{algorithmic}
    \end{algorithm}
\end{minipage}
\end{figure}

\subsection{Multiple Noise Caches}
\label{sec:multiple_caches}
When the size of the dataset is not large enough, using a fixed set of noise samples as the source distribution may lead to overfitting. As a result, using new noise instances as starting points for the numerical integration of the ODE in \eqref{eq:ode} may lead to samples of poor quality. To overcome this issue, we propose to store more than one assigned noise sample per data point in the dataset. At each training iteration a minibatch of data-noise pairs is obtained by first sampling data points and then randomly picking one of the assigned noises. This corresponds to artificially enlarging the dataset by duplicating the data points and does not change the underlying data distribution. Later in the paper, we refer to the number of the assigned noise samples as the number of \textit{noise caches} and show empirically that this technique helps to prevent overfitting and allows for using new noise instances as source points at inference.

\subsection{Speed and Memory Analysis}
\label{sec:implementation}
% For finding the locally optimal assignments at each training iteration, we utilize the Hungarian algorithm~\citep{kuhn1955hungarian}. This makes \methodName comparable to OT-CFM~\citep{tong2023conditional} or BatchOT~\citep{pooladian2023multisample} in terms of time complexity, as it solves the same minibatch matching problem. However, saving and loading the current assignments from the disk introduces a small i/o overhead. This leads to slightly longer processing per minibatch. Nevertheless, this overhead is compensated by the accelerated convergence of \methodName. In our experiments on ImageNet-32 and -64, LOOM-CFM was trained for only 200 and 100 epochs, respectively, compared to BatchOT's 350 and 575 epochs.
% Additionally, to reduce the disk usage, instead of saving the assigned noise instances, we store the \textbf{seed values} used in the random numbers generator to generate the corresponding samples. This incurs a negligible disk usage overhead compared to that required for storing and loading modern datasets sizes. For example, if a dataset contains 1M images, the size of a single noise cache for this dataset would be under 4MB. For implementation details, see Appendix~\ref{app:impl}.
We use the Hungarian algorithm~\citep{kuhn1955hungarian} to find locally optimal assignments at each training step, making \methodName comparable in time complexity to OT-CFM~\citep{tong2023conditional} and BatchOT~\citep{pooladian2023multisample}, as it solves the same minibatch matching problem. While saving and loading assignments adds minor I/O overhead, it is offset by \methodName's faster convergence. For instance, on ImageNet-32 and -64, \methodName required only 200 and 100 epochs, compared to BatchOT’s 350 and 575. To minimize disk usage, we store the random numbers generator's \textbf{seed values} instead of the assigned noise instances. This incurs a negligible disk usage overhead compared to that required for storing and loading modern datasets sizes. For example, a 1M-image dataset requires only under 4MB for the noise cache. For implementation details, see Appendix~\ref{app:impl}.

\section{Experiments}

In this section, we present quantitative experiments to demonstrate the effectiveness of our method on real-world data. First, in Sec.\ref{sec:ablations}, we perform ablation studies to analyze the contribution of different components of \methodName. In Sec.\ref{sec:image_generation}, we compare our approach to prior work, showing that \methodName, as designed, generates higher-quality samples with fewer integration steps. Then we illustrate how \methodName enhances the initialization of the Reflow algorithm\citep{liu2023flow}, removing the need for multiple Reflow iterations. Lastly, we show that our method is compatible with training in the latent space of a pre-trained autoencoder, enabling higher-resolution synthesis.

In all experiments, the main reported metric is FID~\citep{fid} which measures the distance between the distributions of some pre-trained deep features of real and generated data. It has been proven to correlate well with human perception and established as a conventional metric for image quality. As commonly done in the literature, we report the FID on the validation set with respect to 50K generated samples. The other measure that is reported is NFE (Number of Function Evaluations) that refers to the number of integration steps used to produce the corresponding result. The combination of the FID and the NFE acts as a proxy measure for the curvature of the trajectories. Low FID values along with low NFE speak for the straightness of the sampling trajectories, as the model is able to generate high-quality images using small number of steps. For qualitative results, see Appendix~\ref{app:qualitative_results}.

\subsection{Ablations}
\label{sec:ablations}
In this section, we test different versions of our algorithm to understand the impact of each component on the final performance. All ablations are conducted for unconditional generation on CIFAR10~\citep{cifar10}, a dataset of $32 \times 32$ resolution images from 10 classes containing 50k training and 10k validation images. The metrics are reported on the validation set. 

\noindent\textbf{Training with fixed source:} We start by training a naive version of CFM, by fixing the initial random assignment between the collection of noise samples and the images. Interestingly, while the results are worse than those of \methodName (see \figref{fig:abl_abl} and Appendix~\ref{app:detailed_results}, ``Fixed source, w/o reassignments''), the FID at 4 and 8 NFE are better than those of OT-CFM~\citep{tong2023conditional}.

\noindent\textbf{Training after convergence:} A logical variant of our approach would be to wait till the algorithm converges to the stable matching and only then start training the vector field. However, this version results in slightly worse results (see \figref{fig:abl_abl} and Appendix~\ref{app:detailed_results}, ``Train After Convergence''). This might be related to overfitting to the fixed collection of noise samples and the corresponding matching. Indeed, since the algorithm starts with a random matching, training the network from the very beginning, while the matching keeps improving, can be viewed as a soft transition between the independent coupling and the coupling induced by our method.

\noindent\textbf{Training without saving reassignments:} Next, we test the impact of updating the global matching with the local reassignments. Instead of using a fixed set of noises and exchanging the information between different minibatches, for a given minibatch of images, we always sample a new minibatch of noises from the standard Gaussian distribution and assign them to the images by calculating the optimal matching. This variant replicates the method of MFM $^\text{w}/$ BatchOT~\citep{pooladian2023multisample}. While the quality of generated images with this method is comparable to the quality of \methodName at large NFE, with few sampling steps \methodName is significantly better (see \figref{fig:abl_abl} and Appendix~\ref{app:detailed_results}, ``W/o saving local reassignments''). Moreover, we also explicitly demonstrate that the curvature of the trajectories, measured as $1 - \bar v_\theta(\phi(z, t), t)^\top \bar v_\theta(\phi(z, t + \Delta t), t + \Delta t)$, is smaller with \methodName (see \figref{fig:abl_curv}). Here $\bar v_\theta$ is the normalized vector field and $t$ and $t + \Delta t$ are two consecutive integration timestamps.

\noindent\textbf{Batch size:} Further, we test \methodName with different batch sizes. As shown in \figref{fig:abl_bs}, with large NFE different batch sizes lead to similar performance. However, with small NFE larger batch size significantly overperforms the smaller ones. This indicates that the method trained with a larger batch size tends to yield straighter sampling trajectories. Nevertheless, we would like to point out that \methodName trained with batch size 32 produces on par results for 8 and 12 NFE with OT-CFM~\citep{tong2023conditional} that was trained with batch size 128 and outperforms it with larger batch sizes. This demonstrates the effectiveness of our caching scheme.

\noindent\textbf{Number of noise caches:} Finally, we ablate the number of noise caches used for training. \figref{fig:abl_nc} shows that larger number of noise caches improves the performance. We found empirically, that 4 caches are enough for a dataset of size comparable to CIFAR10. We observed that the improvement diminishes with more caches, as the method becomes slower to converge and discovers worse couplings as the number of noise caches grows (see Appendix~\ref{app:detailed_results}). However, if the dataset is large enough (e.g. ImageNet~\citep{imagenet}), even a single cache may be sufficient.

\begin{figure}[t]
\begin{minipage}{.49\linewidth}
    \centering
    \includegraphics[width=0.95\linewidth]{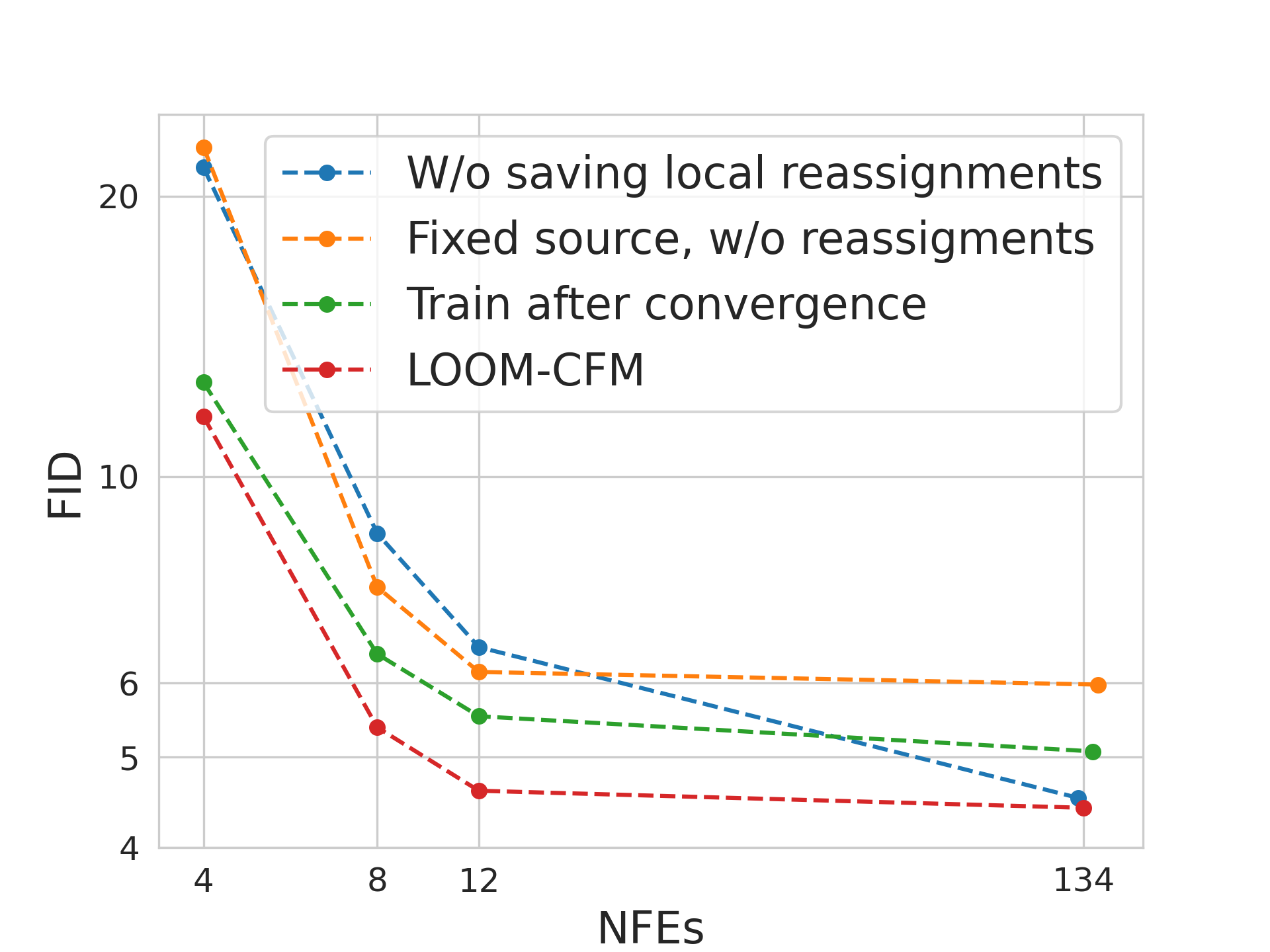}
    \captionof{figure}{Ablations on different design choices on CIFAR10.}
    \label{fig:abl_abl}
\end{minipage}%
\hfill%
\begin{minipage}{.49\linewidth}
    \centering
    \includegraphics[width=0.95\linewidth]{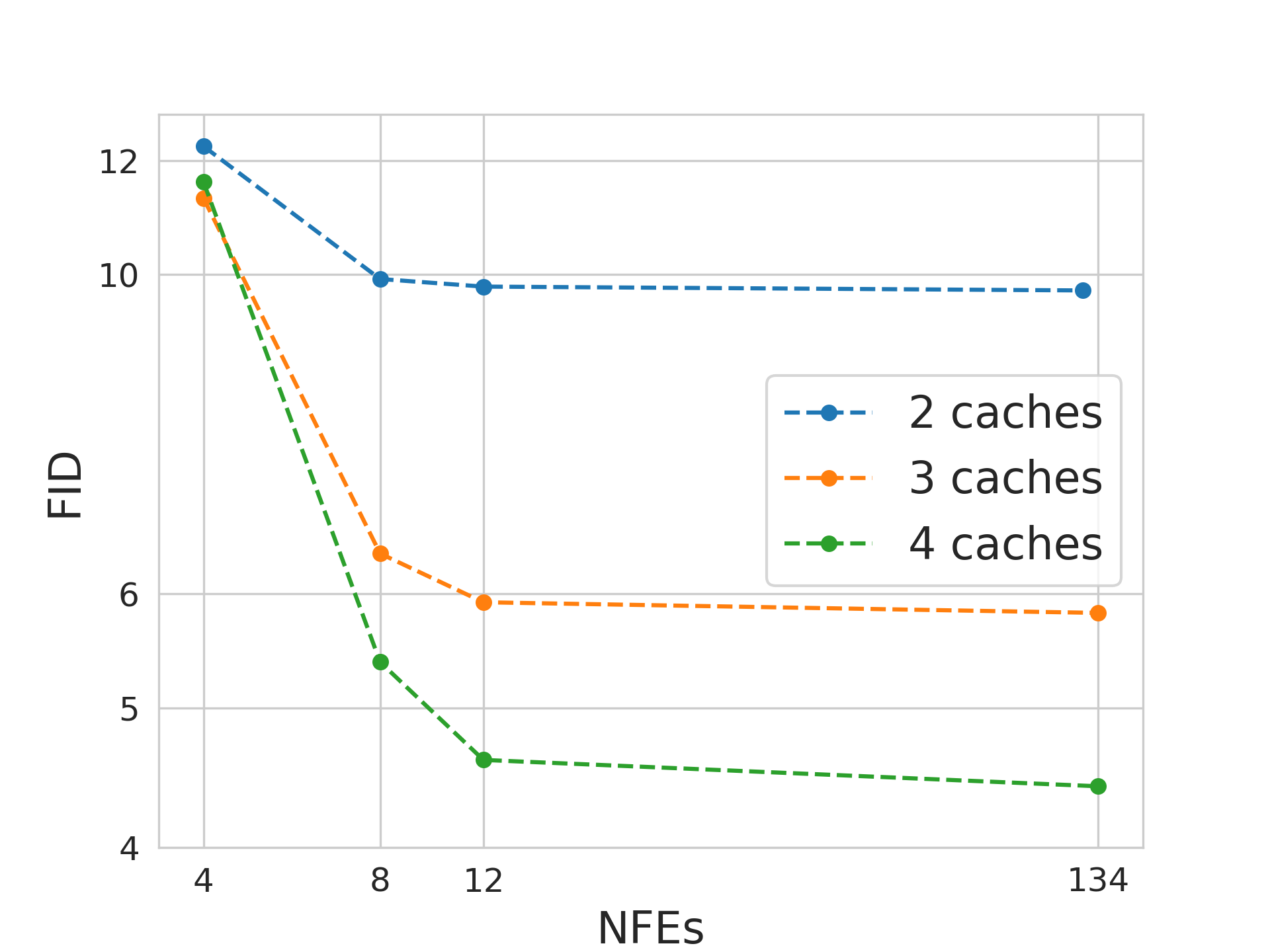}
    \captionof{figure}{Ablation on the number of caches used on CIFAR10.}
    \label{fig:abl_nc}
\end{minipage}%
\\
% \end{figure}%
% \begin{figure}[t]
\begin{minipage}{.49\linewidth}
    \centering
    \includegraphics[width=0.95\linewidth]{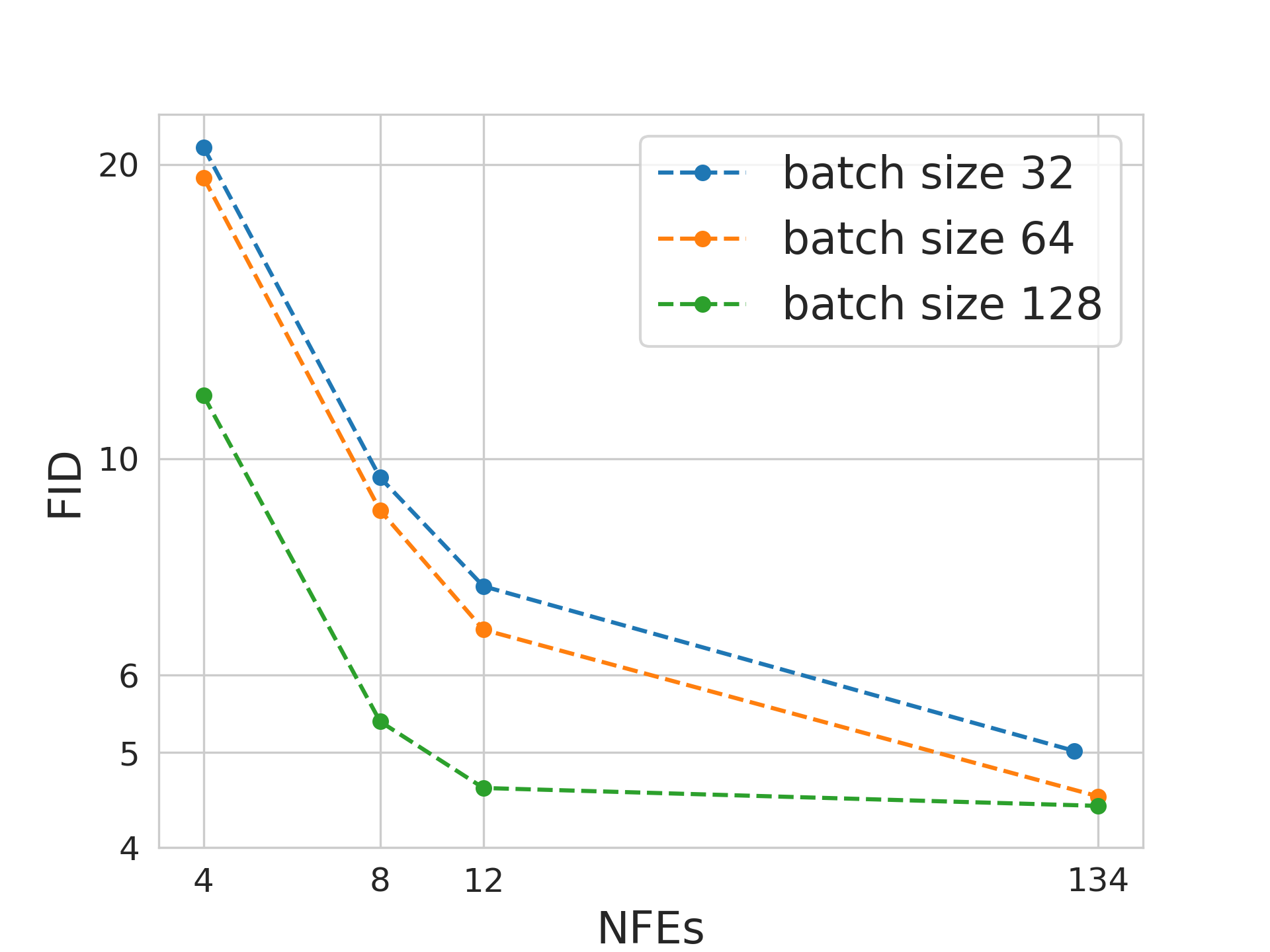}
    \captionof{figure}{Ablation on the size of the minibatch used on CIFAR10.}
    \label{fig:abl_bs}
\end{minipage}%
\hfill%
\begin{minipage}{.49\linewidth}
    \centering
    \includegraphics[width=0.95\linewidth]{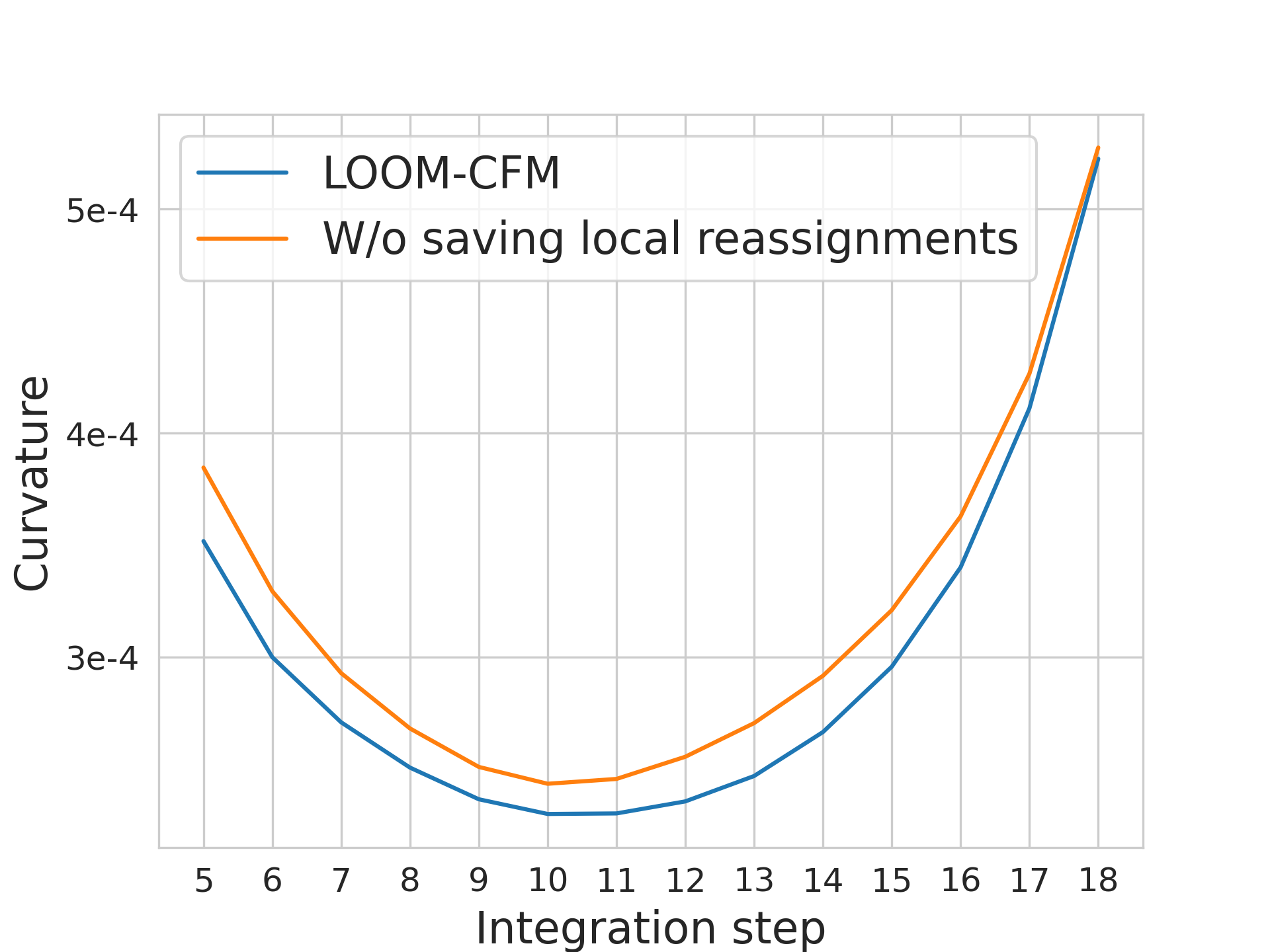}
    \captionof{figure}{Curvature of the sampling trajectories for the models trained on CIFAR10. Average of 1000 trajectories is reported.}
    \label{fig:abl_curv}
\end{minipage}
\end{figure}%

\subsection{Results}\label{sec:image_generation}
\noindent\textbf{Unconditional Image Generation.} We train \methodName for unconditional generation on CIFAR10~\citep{cifar10} and ImageNet32/64~\citep{imagenet}, with the results presented in Tables~\ref{tab:cifar_results} and~\ref{tab:image_net_results}, respectively. \methodName consistently achieves better few-step FID scores compared to minibatch OT methods, such as OT-CFM~\citep{tong2023conditional} and MFM $^\text{w}/$ BatchOT~\citep{pooladian2023multisample}. Additionally, \methodName outperforms \citet{lee2023minimizing} and delivers on par quality with NFDM-OT~\citep{bartosh2024neural}, surpassing it with 12 NFE while slightly underperforming at lower NFE. Unlike these methods, which require training additional components, \methodName optimizes the original CFM objective with a modified coupling distribution.

% \begin{minipage}[t]{.5\linewidth}
\begin{table}[t]
    \centering
    \footnotesize
    \caption{Unconditional generation results on CIFAR10. ($^*$) indicates that the reported numbers are calculated by us using the official code, as they were not reported in the corresponding papers.}
    \label{tab:cifar_results}
    \begin{tabular}{lccr}
        \toprule
        Method & Solver & \multicolumn{2}{@{}c@{}}{CIFAR10} \\
        & & NFE & FID$\downarrow$ \\
        \hline
        \multicolumn{4}{@{}l}{\textit{ODE/SDE-based methods}} \\
        DDPM~\citep{ho2020denoising} & - & 1000 & 3.17 \\
        DDPM~\citep{ho2020denoising} & dopri5 & 274 & 7.48 \\
        St. Interpolatns~\citep{albergo2023stochastic} & - & - & 10.27 \\
        FM $^\text{w}/$ OT~\citep{lipman2022flow} & dopri5 & 142 & 6.35 \\
        I-CFM~\citep{tong2023conditional} & euler & 100 & 4.46 \\
        I-CFM~\citep{tong2023conditional} & euler & 1000 & 3.64 \\
        I-CFM~\citep{tong2023conditional} & dopri5 & 146 & 3.66 \\
        \hline
        \multicolumn{4}{@{}l}{\textit{Improved sampling}} \\
        DDIM~\citep{song2021denoising} & - & 10 & 13.36 \\
        DPM Solver-2~\citep{lu2022dpm} & - & 12 & 5.28 \\
        DPM Solver-3~\citep{lu2022dpm} & - & 24 & 2.75 \\
        RK2-BES~\citep{shaul2024bespoke} & - & 10 & 2.73 \\
        RK2-BES~\citep{shaul2024bespoke} & - & 20 & 2.59 \\
        \hline
        \multicolumn{4}{@{}l}{\textit{Rectified Flows}} \\
        1-Rectified Flow~\citep{liu2023flow} & euler & 1 & 378 \\ 
        \qquad + distill & euler & 1 & 6.18 \\ 
        2-Rectified Flow~\citep{liu2023flow} & euler & 1 & 12.21 \\
        \qquad + distill & euler & 1 & 4.85 \\ 
        3-Rectified Flow~\citep{liu2023flow} & euler & 1 & 8.15 \\
        \qquad + distill & euler & 1 & 5.21 \\ 
        \hline
        \multicolumn{4}{@{}l}{\textit{Improved training for straighter trajectories}} \\
        % OT-CFM~\citep{tong2023conditional}$^*$ & euler & 4 & 30.40 \\
        OT-CFM~\citep{tong2023conditional}$^*$ & midpoint & 4 & 15.95 \\
        OT-CFM~\citep{tong2023conditional}$^*$ & midpoint & 8 & 9.73 \\
        OT-CFM~\citep{tong2023conditional}$^*$ & midpoint & 12 & 7.77 \\
        % OT-CFM~\citep{tong2023conditional}$^*$ & euler & 8 & 16.28 \\
        % OT-CFM~\citep{tong2023conditional}$^*$ & euler & 12 & 12.31 \\
        OT-CFM~\citep{tong2023conditional} & euler & 100 & 4.44 \\
        OT-CFM~\citep{tong2023conditional} & dopri5 & 134 & 3.57 \\
        Minimizing Trajectory Curvature~\citep{lee2023minimizing} & heun & 5 & 18.74 \\
        NFDM-OT~\citep{bartosh2024neural} & euler & 2 & 12.44 \\
        NFDM-OT~\citep{bartosh2024neural} & euler & 4 & 7.76 \\
        NFDM-OT~\citep{bartosh2024neural} & euler & 12 & 5.20 \\
        \hline
        \multicolumn{4}{@{}l}{\textit{Our results}} \\
        % \methodName (\textit{ours}) 4 caches & midpoint & 4 & 11.06 \\ % & 4 & 12.17 \\
        % \methodName (\textit{ours}) 4 caches & midpoint & 8 & 4.95 \\ % 8 & 6.16 \\
        % \methodName (\textit{ours}) 4 caches & midpoint & 12 & 4.29 \\ % 12 & 5.42 \\
        % \methodName (\textit{ours}) 4 caches & dopri5 & 134 & 4.17 \\ % 135 & 5.23 \\
        % \methodName (\textit{ours}) 4 caches + \textit{reflow} & euler & 1 & 7.30 \\ % 1 & 9.05 \\
        % \methodName (\textit{ours}) 4 caches + \textit{reflow} & midpoint & 2 & 4.19 \\ % 2 & 5.23 \\
        \methodName (\textit{ours}) 4 caches & midpoint & 4 & 11.60 \\ % & 4 & 12.17 \\
        \methodName (\textit{ours}) 4 caches & midpoint & 8 & 5.38 \\ % 8 & 6.16 \\
        \methodName (\textit{ours}) 4 caches & midpoint & 12 & 4.60 \\ % 12 & 5.42 \\
        \methodName (\textit{ours}) 4 caches & dopri5 & 134 & 4.41 \\ % 135 & 5.23 \\
        \methodName (\textit{ours}) 4 caches + \textit{reflow} & euler & 1 & 7.63 \\ % 1 & 9.05 \\
        \methodName (\textit{ours}) 4 caches + \textit{reflow} & midpoint & 2 & 4.49 \\ % 2 & 5.23 \\
        \bottomrule
    \end{tabular}
\end{table}
% \end{minipage}

% \begin{table}[t]
%     \centering
%     \begin{tabular}{lccr}
%         \toprule
%         Method & Solver & \multicolumn{2}{@{}c@{}}{CIFAR10} \\
%         & & NFE & FID$\downarrow$ \\
%         \hline
%         4 caches & midpoint & 4 & 11.06 \\ % & 4 & 12.17 \\
%         4 caches & midpoint & 8 & 4.95 \\ % 8 & 6.16 \\
%         4 caches & midpoint & 12 & 4.29 \\ % 12 & 5.42 \\
%         4 caches & dopri5 & 134 & 4.17 \\ % 135 & 5.23 \\
%         4 caches + \textit{reflow} & euler & 1 & 7.30 \\ % 1 & 9.05 \\
%         4 caches + \textit{reflow} & midpoint & 2 & 4.19 \\ % 2 & 5.23 \\
%         \bottomrule
%     \end{tabular}
%     \caption{Ablations on CIFAR10.}
%     \label{tab:cifar_reflow}
% \end{table}

\begin{table}[t]
    \centering
    \footnotesize
    \caption{Unconditional generation results on ImageNet-32 and ImageNet-64}
    \label{tab:image_net_results}
    \begin{tabular}{lccrcr}
        \toprule
        Method & Solver & \multicolumn{2}{@{}c@{}}{ImageNet-32} & \multicolumn{2}{@{}c@{}}{ImageNet-64} \\
        & & NFE & FID$\downarrow$ & NFE & FID$\downarrow$ \\
        \hline
        \multicolumn{4}{@{}l}{\textit{ODE/SDE-based methods}} \\
        St. Interpolants~\citep{albergo2023stochastic} & - & - & 8.49 & - & - \\
        FM $^\text{w}/$ OT~\citep{lipman2022flow} & dopri5 & 122 & 5.02 & 138 & 14.45 \\
        \hline
        \multicolumn{4}{@{}l}{\textit{Distilled models / Dedicated solvers}} \\
        \hline
        \multicolumn{4}{@{}l}{\textit{Improved training for straighter trajectories}} \\
        MFM $^\text{w}/$ BatchOT~\citep{pooladian2023multisample} & midpoint & 4 & 17.28 & 4 & 38.45 \\
        MFM $^\text{w}/$ BatchOT~\citep{pooladian2023multisample} & midpoint & 8 & 8.73 & 8 & 20.85 \\
        MFM $^\text{w}/$ BatchOT~\citep{pooladian2023multisample} & midpoint & 12 & 7.18 & 12 & 18.27 \\
        MFM $^\text{w}/$ Stable~\citep{pooladian2023multisample} & midpoint & 4 & 21.82 & 4 & 46.08 \\
        MFM $^\text{w}/$ Stable~\citep{pooladian2023multisample} & midpoint & 8 & 9.99 & 8 & 21.36 \\
        MFM $^\text{w}/$ Stable~\citep{pooladian2023multisample} & midpoint & 12 & 7.84 & 12 & 17.60 \\
        NFDM-OT~\citep{bartosh2024neural} & euler & 2 & 9.83 & 2 & 27.70 \\
        NFDM-OT~\citep{bartosh2024neural} & euler & 4 & 6.13 & 4 & 17.28 \\
        NFDM-OT~\citep{bartosh2024neural} & euler & 12 & 4.11 & 12 & 11.58 \\
        \hline
        \multicolumn{4}{@{}l}{\textit{Our results}} \\
        \methodName (\textit{ours}) 1 cache & midpoint & 4 & 13.47 & 4 & 37.24  \\
        \methodName (\textit{ours}) 1 cache & midpoint & 8 & 5.08 & 8 & 11.76 \\
        \methodName (\textit{ours}) 1 cache & midpoint & 12 & 3.89 & 12 & 8.49 \\
        \methodName (\textit{ours}) 1 cache & dopri5 & 137 & 2.75 & 133 & 6.63 \\
        % \methodName (\textit{ours}) 1 cache & midpoint & 4 & 13.36 & 4 & 37.24  \\
        % \methodName (\textit{ours}) 1 cache & midpoint & 8 & 5.07 & 8 & 11.76 \\
        % \methodName (\textit{ours}) 1 cache & midpoint & 12 & 3.85 & 12 & 8.49 \\
        % \methodName (\textit{ours}) 1 cache & dopri5 & 137 & 2.66 & 133 & 6.63 \\
        % \methodName (\textit{ours}) 1 cache & midpoint & 4 & 11.44 & 4 & 37.24  \\
        % \methodName (\textit{ours}) 1 cache & midpoint & 8 & 4.41 & 8 & 11.76 \\
        % \methodName (\textit{ours}) 1 cache & midpoint & 12 & 3.47 & 12 & 8.49 \\
        % \methodName (\textit{ours}) 1 cache & dopri5 & 137 & 2.88 & 133 & 6.63 \\
        \bottomrule
    \end{tabular}
\end{table}

\noindent\textbf{Rectified Flows.} As mentioned earlier, \methodName offers improved initialization for the \textit{reflow} algorithm (i.e., retraining the CFM using the coupling induced by the first-stage model). To validate this, we generated 1M noise-data pairs by sampling from our model trained with 4 caches on CIFAR10 and used those samples to train a new CFM model. The results, shown in Table~\ref{tab:cifar_results} (+ \textit{reflow}), indicate that \methodName combined with \textit{reflow} outperforms not only 2-Rectified Flow~\citep{liu2023flow} but also 3-Rectified Flow. This demonstrates that performing additional \textit{reflows} is unnecessary as long as the first \textit{reflow} is well-initialized.

\noindent\textbf{High-Resolution Image Synthesis.} Finally, we show that our method can be directly applied to training in the latent space of a pre-trained autoencoder, similar to the approaches in~\citet{rombach2022high, dao2023flow}. We trained \methodName on FFHQ $256\times 256$~\citep{ffhq} using a pre-trained autoencoder from~\citet{rombach2022high}. As shown in Table~\ref{tab:ffhq_results}, \methodName achieves a lower FID score with an order of magnitude fewer NFE compared to previous methods using the same model architecture. This highlights the compatibility of our method with conventional techniques for high-resolution synthesis and paves the way for exploring its application to high-resolution and large-scale datasets.

\begin{table}[t]
    \centering
    \footnotesize
    \caption{Unconditional generation results on FFHQ 256$\times$256}
    \label{tab:ffhq_results}
    \begin{tabular}{lccr}
        \toprule
        Method & Solver & \multicolumn{2}{@{}c@{}}{FFHQ-256} \\
        & & NFE & FID$\downarrow$ \\
        \hline
        \multicolumn{4}{@{}l}{\textit{Prior Work}} \\
        LDM~\citep{rombach2022high} & - & 50 & 4.98 \\
        ImageBART~\citep{esser2021imagebart} & - & 3 & 9.57 \\
        Latent Flow Matching~\citep{dao2023flow} (ADM) & dopri5 & 84 & 8.07 \\
        Latent Flow Matching~\citep{dao2023flow} (DiT L/2) & dopri5 & 88 & 4.55 \\
        \hline
        \multicolumn{4}{@{}l}{\textit{Our results}} \\
        \methodName (\textit{ours}) 4 caches (ADM) & midpoint & 2 & 14.89  \\
        \methodName (\textit{ours}) 4 caches (ADM) & midpoint & 4 & 6.76 \\
        \methodName (\textit{ours}) 4 caches (ADM) & midpoint & 8 & 5.50 \\
        \methodName (\textit{ours}) 4 caches (ADM) & dopri5 & 77 & 4.77 \\
        \bottomrule
    \end{tabular}
\end{table}

\section{Conclusions}

In this paper, we introduced \methodName, a method to improve data-noise coupling in training generative models with the CFM framework. Our method is based on finding locally optimal matchings between data and noise at each minibatch. In contrast to previous work, these matchings are stored and recycled in future iterations to obtain a better data-noise assignment at the global level. Through an extensive experimental section, we established that \methodName achieves a better sampling speed-quality trade-off than prior work with no additional computational cost and negligible disk usage overhead during training. \methodName is effective and simple enough to be composed with other techniques, such as rectified flows, to further enhance the sampling speed.

\section*{Acknowledgments}
This work has been supported by Swiss National Science Foundation Projects 188690 and 10001278.

\bibliography{references}

@article{ho2020denoising,
  title={Denoising diffusion probabilistic models},
  author={Ho, Jonathan and Jain, Ajay and Abbeel, Pieter},
  journal={Advances in neural information processing systems},
  volume={33},
  pages={6840--6851},
  year={2020}
}

@article{song2020denoising,
  title={Denoising diffusion implicit models},
  author={Song, Jiaming and Meng, Chenlin and Ermon, Stefano},
  journal={arXiv preprint arXiv:2010.02502},
  year={2020}
}

@article{tong2023conditional,
  title={Conditional flow matching: Simulation-free dynamic optimal transport},
  author={Tong, Alexander and Malkin, Nikolay and Huguet, Guillaume and Zhang, Yanlei and Rector-Brooks, Jarrid and Fatras, Kilian and Wolf, Guy and Bengio, Yoshua},
  journal={arXiv preprint arXiv:2302.00482},
  volume={2},
  number={3},
  year={2023}
}

@article{lipman2022flow,
  title={Flow matching for generative modeling},
  author={Lipman, Yaron and Chen, Ricky TQ and Ben-Hamu, Heli and Nickel, Maximilian and Le, Matt},
  journal={arXiv preprint arXiv:2210.02747},
  year={2022}
}

@inproceedings{albergo2023stochastic,
  title={Building Normalizing Flows with Stochastic Interpolants},
  author={Albergo, Michael and Vanden-Eijnden, Eric},
  booktitle={ICLR 2023 Conference},
  year={2023}
}

@article{pooladian2023multisample,
  title={Multisample flow matching: Straightening flows with minibatch couplings},
  author={Pooladian, Aram-Alexandre and Ben-Hamu, Heli and Domingo-Enrich, Carles and Amos, Brandon and Lipman, Yaron and Chen, Ricky TQ},
  journal={arXiv preprint arXiv:2304.14772},
  year={2023}
}

@article{bartosh2024neural,
  title={Neural Flow Diffusion Models: Learnable Forward Process for Improved Diffusion Modelling},
  author={Bartosh, Grigory and Vetrov, Dmitry and Naesseth, Christian A},
  journal={arXiv preprint arXiv:2404.12940},
  year={2024}
}

@inproceedings{lee2023minimizing,
  title={Minimizing trajectory curvature of ode-based generative models},
  author={Lee, Sangyun and Kim, Beomsu and Ye, Jong Chul},
  booktitle={International Conference on Machine Learning},
  pages={18957--18973},
  year={2023},
  organization={PMLR}
}

@article{dao2023flow,
  title={Flow matching in latent space},
  author={Dao, Quan and Phung, Hao and Nguyen, Binh and Tran, Anh},
  journal={arXiv preprint arXiv:2307.08698},
  year={2023}
}

@inproceedings{liu2023flow,
  title={Flow Straight and Fast: Learning to Generate and Transfer Data with Rectified Flow},
  author={Liu, Xingchao and Gong, Chengyue and Liu, Qiang},
  booktitle={The Eleventh International Conference on Learning Representations (ICLR)},
  year={2023}
}

@inproceedings{rombach2022high,
  title={High-resolution image synthesis with latent diffusion models},
  author={Rombach, Robin and Blattmann, Andreas and Lorenz, Dominik and Esser, Patrick and Ommer, Bj{\"o}rn},
  booktitle={Proceedings of the IEEE/CVF conference on computer vision and pattern recognition},
  pages={10684--10695},
  year={2022}
}

@article{esser2021imagebart,
  title={Imagebart: Bidirectional context with multinomial diffusion for autoregressive image synthesis},
  author={Esser, Patrick and Rombach, Robin and Blattmann, Andreas and Ommer, Bjorn},
  journal={Advances in neural information processing systems},
  volume={34},
  pages={3518--3532},
  year={2021}
}

@article{sommerfeld2019optimal,
  title={Optimal transport: Fast probabilistic approximation with exact solvers},
  author={Sommerfeld, Max and Schrieber, J{\"o}rn and Zemel, Yoav and Munk, Axel},
  journal={Journal of Machine Learning Research},
  volume={20},
  number={105},
  pages={1--23},
  year={2019}
}

@article{schrieber2016dotmark,
  title={Dotmark--a benchmark for discrete optimal transport},
  author={Schrieber, J{\"o}rn and Schuhmacher, Dominic and Gottschlich, Carsten},
  journal={IEEE Access},
  volume={5},
  pages={271--282},
  year={2016},
  publisher={IEEE}
}

@article{dhariwal2021diffusion,
  title={Diffusion models beat gans on image synthesis},
  author={Dhariwal, Prafulla and Nichol, Alexander},
  journal={Advances in neural information processing systems},
  volume={34},
  pages={8780--8794},
  year={2021}
}

@inproceedings{ronneberger2015u,
  title={U-net: Convolutional networks for biomedical image segmentation},
  author={Ronneberger, Olaf and Fischer, Philipp and Brox, Thomas},
  booktitle={Medical image computing and computer-assisted intervention--MICCAI 2015: 18th international conference, Munich, Germany, October 5-9, 2015, proceedings, part III 18},
  pages={234--241},
  year={2015},
  organization={Springer}
}

@article{kuhn1955hungarian,
  title={The Hungarian method for the assignment problem},
  author={Kuhn, Harold W},
  journal={Naval research logistics quarterly},
  volume={2},
  number={1-2},
  pages={83--97},
  year={1955},
  publisher={Wiley Online Library}
}

@article{sinkhorn,
author = {Knight, Philip A.},
title = {The Sinkhorn–Knopp Algorithm: Convergence and Applications},
journal = {SIAM Journal on Matrix Analysis and Applications},
volume = {30},
number = {1},
pages = {261-275},
year = {2008},
doi = {10.1137/060659624},
URL = { 
        https://doi.org/10.1137/060659624
},
eprint = { 
        https://doi.org/10.1137/060659624
}
}

@book{villani2009optimal,
  title={Optimal transport: old and new},
  author={Villani, C{\'e}dric and others},
  volume={338},
  year={2009},
  publisher={Springer}
}

@article{fatras2021minibatch,
  title={Minibatch optimal transport distances; analysis and applications},
  author={Fatras, Kilian and Zine, Younes and Majewski, Szymon and Flamary, R{\'e}mi and Gribonval, R{\'e}mi and Courty, Nicolas},
  journal={arXiv preprint arXiv:2101.01792},
  year={2021}
}

@article{cuturi2013sinkhorn,
  title={Sinkhorn distances: Lightspeed computation of optimal transport},
  author={Cuturi, Marco},
  journal={Advances in neural information processing systems},
  volume={26},
  year={2013}
}

@article{xie2024randomized,
  title={Randomized methods for computing optimal transport without regularization and their convergence analysis},
  author={Xie, Yue and Wang, Zhongjian and Zhang, Zhiwen},
  journal={Journal of Scientific Computing},
  volume={100},
  number={2},
  pages={37},
  year={2024},
  publisher={Springer}
}

@article{peyre2019computational,
  title={Computational optimal transport: With applications to data science},
  author={Peyr{\'e}, Gabriel and Cuturi, Marco and others},
  journal={Foundations and Trends{\textregistered} in Machine Learning},
  volume={11},
  number={5-6},
  pages={355--607},
  year={2019},
  publisher={Now Publishers, Inc.}
}

@misc{roughgarden2016cs261,
  title={CS261: A Second Course in Algorithms Lecture\# 5: Minimum-Cost Bipartite Matching},
  author={Roughgarden, Tim},
  year={2016},
  publisher={January}
}

@inproceedings{
shaul2024bespoke,
title={Bespoke Solvers for Generative Flow Models},
author={Neta Shaul and Juan Perez and Ricky T. Q. Chen and Ali Thabet and Albert Pumarola and Yaron Lipman},
booktitle={The Twelfth International Conference on Learning Representations},
year={2024},
url={https://openreview.net/forum?id=1PXEY7ofFX}
}

@inproceedings{
song2021denoising,
title={Denoising Diffusion Implicit Models},
author={Jiaming Song and Chenlin Meng and Stefano Ermon},
booktitle={International Conference on Learning Representations},
year={2021},
url={https://openreview.net/forum?id=St1giarCHLP}
}

@article{lu2022dpm,
  title={Dpm-solver: A fast ode solver for diffusion probabilistic model sampling in around 10 steps},
  author={Lu, Cheng and Zhou, Yuhao and Bao, Fan and Chen, Jianfei and Li, Chongxuan and Zhu, Jun},
  journal={Advances in Neural Information Processing Systems},
  volume={35},
  pages={5775--5787},
  year={2022}
}

@article{song2023consistency,
  title={Consistency models},
  author={Song, Yang and Dhariwal, Prafulla and Chen, Mark and Sutskever, Ilya},
  journal={arXiv preprint arXiv:2303.01469},
  year={2023}
}

@article{luhman2021knowledge,
  title={Knowledge distillation in iterative generative models for improved sampling speed},
  author={Luhman, Eric and Luhman, Troy},
  journal={arXiv preprint arXiv:2101.02388},
  year={2021}
}

@inproceedings{esser2024scaling,
  title={Scaling rectified flow transformers for high-resolution image synthesis},
  author={Esser, Patrick and Kulal, Sumith and Blattmann, Andreas and Entezari, Rahim and M{\"u}ller, Jonas and Saini, Harry and Levi, Yam and Lorenz, Dominik and Sauer, Axel and Boesel, Frederic and others},
  booktitle={Forty-first International Conference on Machine Learning},
  year={2024}
}

@article{lee2024improving,
  title={Improving the Training of Rectified Flows},
  author={Lee, Sangyun and Lin, Zinan and Fanti, Giulia},
  journal={arXiv preprint arXiv:2405.20320},
  year={2024}
}

@inproceedings{
salimans2022progressive,
title={Progressive Distillation for Fast Sampling of Diffusion Models},
author={Tim Salimans and Jonathan Ho},
booktitle={International Conference on Learning Representations},
year={2022},
url={https://openreview.net/forum?id=TIdIXIpzhoI}
}

@article{goodfellow2020generative,
  title={Generative adversarial networks},
  author={Goodfellow, Ian and Pouget-Abadie, Jean and Mirza, Mehdi and Xu, Bing and Warde-Farley, David and Ozair, Sherjil and Courville, Aaron and Bengio, Yoshua},
  journal={Communications of the ACM},
  volume={63},
  number={11},
  pages={139--144},
  year={2020},
  publisher={ACM New York, NY, USA}
}

@article{fid,
  title={Gans trained by a two time-scale update rule converge to a local nash equilibrium},
  author={Heusel, Martin and Ramsauer, Hubert and Unterthiner, Thomas and Nessler, Bernhard and Hochreiter, Sepp},
  journal={Advances in neural information processing systems},
  volume={30},
  year={2017}
}

@article{cifar10,
  title={Learning multiple layers of features from tiny images},
  author={Krizhevsky, Alex and Hinton, Geoffrey and others},
  year={2009},
  publisher={Toronto, ON, Canada}
}

@article{imagenet,
Author = {Olga Russakovsky and Jia Deng and Hao Su and Jonathan Krause and Sanjeev Satheesh and Sean Ma and Zhiheng Huang and Andrej Karpathy and Aditya Khosla and Michael Bernstein and Alexander C. Berg and Li Fei-Fei},
Title = {{ImageNet Large Scale Visual Recognition Challenge}},
Year = {2015},
journal   = {International Journal of Computer Vision (IJCV)},
doi = {10.1007/s11263-015-0816-y},
volume={115},
number={3},
pages={211-252}
}

@inproceedings{ffhq,
  title={A style-based generator architecture for generative adversarial networks},
  author={Karras, Tero and Laine, Samuli and Aila, Timo},
  booktitle={Proceedings of the IEEE/CVF conference on computer vision and pattern recognition},
  pages={4401--4410},
  year={2019}
}

@inproceedings{davtyan2023efficient,
  title={Efficient video prediction via sparsely conditioned flow matching},
  author={Davtyan, Aram and Sameni, Sepehr and Favaro, Paolo},
  booktitle={Proceedings of the IEEE/CVF International Conference on Computer Vision},
  pages={23263--23274},
  year={2023}
}

@article{hu2023motion,
  title={Motion flow matching for human motion synthesis and editing},
  author={Hu, Vincent Tao and Yin, Wenzhe and Ma, Pingchuan and Chen, Yunlu and Fernando, Basura and Asano, Yuki M and Gavves, Efstratios and Mettes, Pascal and Ommer, Bjorn and Snoek, Cees GM},
  journal={arXiv preprint arXiv:2312.08895},
  year={2023}
}

@article{guan2024lafma,
  title={LAFMA: A Latent Flow Matching Model for Text-to-Audio Generation},
  author={Guan, Wenhao and Wang, Kaidi and Zhou, Wangjin and Wang, Yang and Deng, Feng and Wang, Hui and Li, Lin and Hong, Qingyang and Qin, Yong},
  journal={arXiv preprint arXiv:2406.08203},
  year={2024}
}

@article{ho2022classifier,
  title={Classifier-free diffusion guidance},
  author={Ho, Jonathan and Salimans, Tim},
  journal={arXiv preprint arXiv:2207.12598},
  year={2022}
}
\bibliographystyle{iclr2025_conference}

\newpage
\appendix

\section*{Appendix}

Here we provide some additional details and results that could not be included in the main paper due to the page number limit. Those include: the derivation of the optimal vector field for the Gaussian to Gaussian case (Appendix~\ref{app:gtg}); the proof of the finite convergence of our algorithm (Appendix~\ref{app:analysis}); implementation details (Appendix~\ref{app:impl}); more quantitative results (Appendix~\ref{app:detailed_results}); more qualitative results (Appendix~\ref{app:qualitative_results}).

\section{Gaussian to Gaussian Case}\label{app:gtg}
In this section, we are looking for the closed form solution for the optimal flow in the case of both source and target distributions being standard normal distributions. That is, we are aiming to finding $\hat v(y, t)$ such that:
\begin{align}
    \hat v(y, t) = \arg \min_{v(y, t)} \mathbb{E}_{x, z \sim {\cal N}(0, 1), t \sim U[0, 1], \varepsilon \sim {\cal N}(0, \sigma^2)} \left\| v(t x + (1 - t) z + \varepsilon, t) - (x - z) \right\|^2, \label{eq:gtg_cfm}
\end{align}

If we write down the expectation explicitly through integrals, we obtain
\begin{align}
    \int_{\mathbb{R}^d} \int_{\mathbb{R}^d} \int_0^1 \int_{\mathbb{R}^d} \left\| v(y, t) - (x - z) \right\|^2 {\cal N}(y | t x + (1 - t) z, \sigma^2 I) {\cal N}(z | 0, I) {\cal N}(x | 0, I) \; dy dt dx dz
\end{align}

Let us define

\begin{align}
    {\cal L}(y, t, v) = \int_{\mathbb{R}^d} \int_{\mathbb{R}^d} \left\| v(y, t) - (x - z) \right\|^2 {\cal N}(y | t x + (1 - t) z, \sigma^2 I) \; dx dz.
\end{align}

The above functional in the objective takes the form

\begin{align}
    \int_{\mathbb{R}^d} \int_0^1 {\cal L}(y, t, v(y, t)) \; dy dt.
\end{align}

In order to find the optimal $v(y, t)$, one needs to write down the Euler-Lagrange equation (notice that ${\cal L}$ does not depend on the derivatives of $v$, and hence the Euler-Lagrange equation only includes the derivative w.r.t $v$)

\begin{align}
    \frac{\partial {\cal L}(y, t, v(y, t))}{\partial v} = 0,
\end{align}

which in this special case takes the form (here and later on we omit the integration domains for saving space and assume $\mathbb{R}^d$ everywhere)

\begin{align}
\iint 2 * \left(v(y, t) - (x - z)\right) {\cal N}(y | t x + (1 - t) z, \sigma^2 I) {\cal N}(z | 0, I) {\cal N}(x | 0, I) \; dx dz = 0.
\end{align}

And hence, solving for $v(y, t)$, gives

\begin{align}
    \hat v(y, t) = \frac{\iint (x - z) {\cal N}(y | t x + (1 - t) z, \sigma^2 I) {\cal N}(z | 0, I) {\cal N}(x | 0, I) \; dx dz}{\iint {\cal N}(y | t x + (1 - t) z, \sigma^2 I) {\cal N}(z | 0, I) {\cal N}(x | 0, I) \; dx dz}
\end{align}

First, let us calculate the denominator:

{\footnotesize
\begin{gather}
\iint \mathcal{N}(y \mid t x + (1 - t) z, \sigma^2 I) \mathcal{N}(z \mid 0, I) \mathcal{N}(x \mid 0, I) \, dx \, dz = \\
\left| C_0 = \frac{1}{(2\pi)^{\frac{3d}{2}}\sigma^{d}} \right| \\
= C_0 \iint \exp\left(-\frac{1}{2}\left(x^\top x + z^\top z + \frac{1}{\sigma^2} 
\left((y - tx) - (1 - t)z\right)^\top\left((y - tx) - (1 - t)z\right)\right)\right) dx \, dz \\
= C_0 \iint \exp\left(-\frac{1}{2}\left(1 + \frac{(1 - t)^2}{\sigma^2}\right) z^\top z + \frac{(1 - t)}{\sigma^2}(y - tx)^\top z - \frac{1}{2}x^\top x - \frac{1}{2\sigma^2} (y - tx)^\top(y - tx)\right) dx \, dz \\
\left| C_1 = \frac{1}{(2\pi)^d (\sigma^2 + (1 - t)^2)^{\frac{d}{2}}} \right| \\
= C_1 \int \exp\left(-\frac{1}{2} x^\top x - \frac{1}{2}\frac{1}{\sigma^2 + (1 - t)^2} \left[y^\top y + t^2 x^\top x - 2t y^\top x\right] \right) dx \\
= C_1 \int \exp\left(-\frac{1}{2}\left( 1 + \frac{t^2}{\sigma^2 + (1 - t)^2}\right) x^\top x + \frac{t}{\sigma^2 + (1 - t)^2} y^\top x - \frac{1}{2} \frac{1}{\sigma^2 + (1 - t)^2} y^\top y\right) dx \\
= C_1 \frac{(2\pi)^{\frac{d}{2}}}{\left(1 + \frac{t^2}{\sigma^2 + (1 - t)^2}\right)^\frac{d}{2}} \exp \left(\frac{1}{2}\left(\frac{t}{\sigma^2 + (1 - t)^2}\right)^2 \frac{1}{1 + \frac{t^2}{\sigma^2 + (1 - t)^2}} y^\top y - \frac{1}{2}\frac{1}{\sigma^2 + (1 - t)^2} y^\top y\right) \\
\end{gather}
} 

Here we used the well-known Gaussian integrals of the form

\begin{align}
    \int \exp(-\frac{1}{2} x^\top A x + b^\top x + c) \,dx = \sqrt{\det(2\pi A^{-1})} \exp(\frac{1}{2}b^\top A^{-1} b + c).
\end{align}

Finally, the last expression can be simplified to

\begin{gather}
    (2\pi)^{-\frac{d}{2}} (\sigma^2 + t^2 + (1 - t)^2)^{-\frac{d}{2}} \exp \left(-\frac{1}{2} \frac{y^\top y}{\sigma^2 + t^2 + (1 - t)^2}\right),
\end{gather}

which is the probability density function of

\begin{align}
    {\cal N}(y \,| \,0, \left(\sigma^2 + t^2 + (1 - t)^2\right) I).
\end{align}

Now let us move to the nominator, which can be rewritten as $F(y, t) - F(y, 1 - t)$, where

\begin{align}
    F(y, t) = \iint x \,{\cal N}(y | t x + (1 - t) z, \sigma^2 I) {\cal N}(z | 0, I) {\cal N}(x | 0, I) \; dx dz.
\end{align}

Thus, we only need to calculate $F(y, t)$. Similarly to the denominator, using the well-known Gaussian integrals of the form 

\begin{align}
    \int x \cdot \exp\left(-\frac{1}{2} x^\top x + b^\top x + c\right) \, dx = \left(\frac{2\pi}{a}\right)^{\frac{d}{2}} \frac{b}{a} \exp\left(\frac{b^\top b}{2 a} + c\right),
\end{align}

we can obtain

\begin{align}
    F(y, t) = \frac{yt}{\sigma^2 + t^2 + (1 - t)^2} {\cal N}(y \, | \, 0, \sigma^2 + t^2 + (1 - t)^2).
\end{align}

Hence, by combining all the results, we get the optimal vector field

\begin{align}
    \hat v(y, t) = y \cdot s(t), \quad \text{where}\quad s(t) = \frac{2t - 1}{\sigma^2 + t^2 + (1 - t)^2}.
\end{align}

The vector field of such form is parallel to the line connecting $y$ and the origin and changes its direction halfway through time $t \in [0, 1]$ (see \figref{fig:st}). This analytical result coincides with the empirical observation of fitting the vector field as a neural network via gradient descent in the two-dimensional setting (see \secref{sec:cfm}).

\begin{figure}
    \centering
    \includegraphics[width=0.5\linewidth]{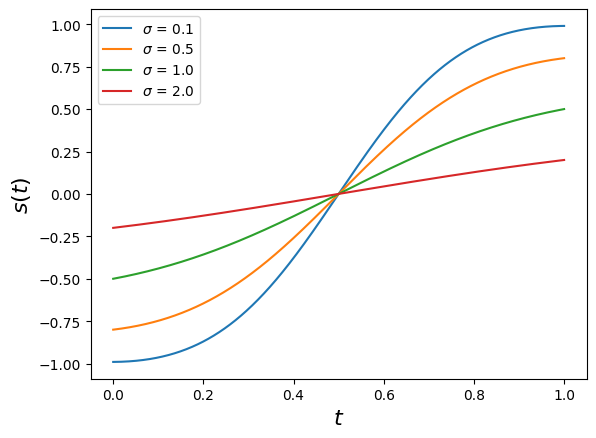}
    \caption{The form of the multiplier $s(t)$ by $y$ in the optimal vector field $v(y, t) = y \cdot s(t)$ for different values of $\sigma$.}
    \label{fig:st}
\end{figure}

\section{Analysis}
\label{app:analysis}
In this section we provide a proof for the theorem established in section \ref{sec:lbm} and include a counter-example, where only using batch size equal to the size of the dataset could solve for the globally optimal matching (see Fig.~\ref{fig:counter}).

As noted in section \ref{sec:ot}, \ref{eq:assignment_perm} can be understood as a minimum cost perfect matching problem in a complete bipartite graph. The assignment task is captured by the graph $\mathcal{G} = (V = X \cup Z, E)$, where $X$ is the set of nodes representing the data points and $Z$ is the set of nodes representing the noise samples. The set of edges $E = X\times E$, where the cost of an edge $(x_i, z_j) \in E$ is given by the $c_{ij} = \|x_i - z_j\|_2$, defines the complete bipartite graph (see Figure \ref{fig:loom} for a visual representation).

A perfect matching $M$ is a set of $n$ edges in $\mathcal{G}$ that cover all the vertices. The cost of a matching is the sum of the costs of the edges. An optimal matching is one achieving the minimal cost. The optimality of a matching $M$ is entirely captured by the absence of \emph{negative $M$-alternating cycles}.

\begin{definition}[M-alternating cycles]
    Given a perfect matching $M$, any cycle in $\mathcal{G}$ given by a sequence of edges $C = (x_{i_1}, z_{i_1})(z_{i_1}, x_{i_2})(x_{i_2}, z_{i_2}) \dots (x_{i_k}, z_{i_k})(z_{i_k}, x_{i_1})$ where for any $p \in [k]$, we have $(x_{i_p}, z_{i_p}) \in M$ is called an alternating $M$-cycle.
\end{definition}

$M$-alternating cycles as their name suggests, pass from $X$ to $Z$ back and forth while passing through edges in $M$ to go from $X$ to $Z$. An $M$-alternating cycle is said to be negative if the sum of the costs of the edges in $M$ is larger than the cost of the edges in absent from $M$:
\begin{equation}
\sum_{e \in C\setminus M} c_e < \sum_{e \in C \cap M} c_e.
\label{eq:negativecycle}
\end{equation}
Consequently, for any matching $M$, if a negative $M$-alternating cycle $C$ exists, then matching $M'$ with lower cost can be constructed by taking $M' = (C\setminus M) \cup (M\setminus C)$. In other words, by swapping the \emph{going edges} in $C$(from $X$ to $Z$) with the \emph{return edges} (from $Z$ to $X$), we can diminish the cost thanks to \eqref{eq:negativecycle}. A necessary condition for optimality is thus the absence of negative $M$-alternating cycles. The following theorem shows that it is in fact a sufficient condition.

\begin{proposition}[Thm 2.2 \citep{roughgarden2016cs261}]
    A matching $M$ is optimal if and only if there are no negative $M$-alternating cycles.
    \label{prop:globopt}
\end{proposition}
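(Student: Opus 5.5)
The necessity direction is already established in the discussion preceding the statement. If a negative $M$-alternating cycle $C$ exists, then $M' = (C\setminus M)\cup(M\setminus C)$ is a perfect matching of strictly smaller cost, so $M$ is not optimal. It therefore remains to prove sufficiency: if $M$ admits no negative $M$-alternating cycle, then $M$ is optimal. I would argue by comparing $M$ directly with an optimal matching $M^*$. One exists because the set of perfect matchings is finite and nonempty.

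\textbf{Step 1: structure of the symmetric difference.} Let $D = M \,\triangle\, M^* = (M\setminus M^*)\cup(M^*\setminus M)$, viewed as a subgraph of $\mathcal{G}$. Since $M$ and $M^*$ are both perfect matchings, every vertex of $\mathcal{G}$ is incident to exactly one edge of $M$ and exactly one edge of $M^*$. Consequently every vertex has degree either $0$ in $D$, when its $M$-edge and $M^*$-edge coincide, or exactly $2$ in $D$, with one edge from $M\setminus M^*$ and one from $M^*\setminus M$. A finite graph in which all degrees are $0$ or $2$ is a vertex-disjoint union of simple cycles $C_1,\dots,C_r$. Along each $C_j$ the edges alternate between $M$ and $M^*$. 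Because $\mathcal{G}$ is bipartite, each cycle has even length, and common edges have been removed, so each cycle has length at least $4$.

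\textbf{Step 2: each cycle is $M$-alternating.} Traverse $C_j$ starting from some $x_{i_1}\in X$ along its $M$-edge. Alternation then forces the pattern required in the definition. Each step from $X$ to $Z$ uses an edge of $M$, and each step from $Z$ back to $X$ uses an edge of $M^*\setminus M$, which is an edge not in $M$. Hence $C_j$ is an $M$-alternating cycle with $C_j\setminus M \subseteq M^*$ and $C_j\cap M\subseteq M$.

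\textbf{Step 3: cost comparison.} Edges common to $M$ and $M^*$ cancel, so
\begin{equation*}
\mathrm{cost}(M^*) - \mathrm{cost}(M) = \sum_{j=1}^r \Big(\sum_{e\in C_j\setminus M} c_e - \sum_{e\in C_j\cap M} c_e\Big).
\end{equation*}
By hypothesis no $C_j$ is negative, so each bracket is nonnegative by \eqref{eq:negativecycle}. Therefore $\mathrm{cost}(M)\le \mathrm{cost}(M^*)$, and $M$ is optimal.

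The main obstacle is Step 1: one must verify carefully that the symmetric difference decomposes into alternating cycles. This follows from the degree count at each vertex. Once that is in place, the cost identity in Step 3 is immediate.
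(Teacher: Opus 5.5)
Your proof is correct and takes essentially the paper's route: necessity is the cycle-swapping argument the paper gives just before the statement, and for sufficiency the paper defers to Roughgarden's Theorem 2.2, whose proof is your decomposition of $M \triangle M^*$ into vertex-disjoint $M$-alternating cycles followed by summing their costs. As a minor simplification, you could compare $M$ against an arbitrary perfect matching instead of an optimal one, which makes the remark about the existence of $M^*$ unnecessary.
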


Given the exposition above, \methodName can be understood as an iterative negative-cycle elimination scheme. Indeed the algorithm starts from a matching $M_0$ given by the edges $(x_i, z_{\tau_0(i)})$, then at each iteration $k$, a subgraph $\mathcal{G}_k = (V_k = \{x_{n_j}\}_{j=1}^m \cup \{z_{\tau_{k - 1}(n_j)}\}_{j=1}^{m}, E_k)$ consisting of only the $2m$ vertices sampled at \texttt{step} $3$ in \algref{alg:loom} is considered. An optimal matching is computed within that subgraph and the matching $M_{k+1}$ is obtained by updating the edges involving vertices in $V_k$. As computing optimal matchings are equivalent to elimination of all negative cycles, the update of $M_k$ corresponds to the elimination of negative $M_k$-alternating cycles that are contained in $\mathcal{G}_k$. Such cycles are of length at most $m$. The process described by algorithm \ref{alg:loom} is thus a stepwise elimination of negative $M_k$-alternating cycles of length at most $m$. This allows us to characterize the convergence of our method in the following theorem.

\convergencethm*
\begin{proof}
    Let $c(\tau) := \sum_{i \in \mathbb{N}_n} c(x_i, z_{\tau(i)})$. Each update of \methodName (step 4 and 5 of \algref{alg:loom}) improves the matching over a sub-graph selected at \texttt{step 3}. The sequence $c(\tau_k)_k$ is thus non-increasing. Moreover, for all $k$, the sequence of costs takes its values in the finite set $\{ c(\tau): \; \tau \in  S_n \}$. It then follows that $(c(\tau_k))_k$ becomes constant after a finite number of iterations $K_{\text{final}}$ since it is a monotone sequence taking values in a finite set.

    Suppose now that there existed a negative alternating cycle of length less than $m$ in the matching induced by $\tau_{K_{\text{final}}}$. At each iteration, a subgraph containing a negative cycle has a non-zero probability of being selected at \texttt{step 3}. Since $(c(\tau_k))_k$ is constant for all $k > K_{\text{final}}$, this implies that such a subgraph is never sampled at \texttt{step 3} for all $k > K_{\text{final}}$. The probability of such a sub-graph never being sampled for all $k > K_{\text{final}}$ is $0$ by a simple application of Borell-Cantelli's lemma. Consequently, the probability of $\tau_{K_{\text{final}}}$ containing a negative alternating cycle is $0$.
\end{proof}

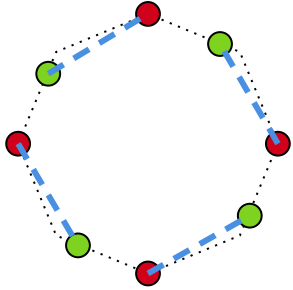
\begin{figure}
    \centering
    \makebox[0.5\textwidth]{
    \tikzset{every picture/.style={line width=0.75pt}} %set default line width to 0.75pt        

\begin{tikzpicture}[x=0.75pt,y=0.75pt,yscale=-1,xscale=1]
%uncomment if require: \path (0,300); %set diagram left start at 0, and has height of 300

%Shape: Regular Polygon [id:dp09304593463357025] 
\draw  [dash pattern={on 0.84pt off 2.51pt}] (396,146.5) -- (376.82,192.82) -- (330.5,212) -- (284.18,192.82) -- (265,146.5) -- (284.18,100.18) -- (330.5,81) -- (376.82,100.18) -- cycle ;
%Shape: Circle [id:dp48655289932510937] 
\draw  [fill={rgb, 255:red, 208; green, 2; blue, 27 }  ,fill opacity=1 ] (259,146.5) .. controls (259,143.19) and (261.69,140.5) .. (265,140.5) .. controls (268.31,140.5) and (271,143.19) .. (271,146.5) .. controls (271,149.81) and (268.31,152.5) .. (265,152.5) .. controls (261.69,152.5) and (259,149.81) .. (259,146.5) -- cycle ;
%Shape: Circle [id:dp0524479279549952] 
\draw  [fill={rgb, 255:red, 208; green, 2; blue, 27 }  ,fill opacity=1 ] (324.5,212) .. controls (324.5,208.69) and (327.19,206) .. (330.5,206) .. controls (333.81,206) and (336.5,208.69) .. (336.5,212) .. controls (336.5,215.31) and (333.81,218) .. (330.5,218) .. controls (327.19,218) and (324.5,215.31) .. (324.5,212) -- cycle ;
%Shape: Circle [id:dp5223863430117186] 
\draw  [fill={rgb, 255:red, 208; green, 2; blue, 27 }  ,fill opacity=1 ] (390,146.5) .. controls (390,143.19) and (392.69,140.5) .. (396,140.5) .. controls (399.31,140.5) and (402,143.19) .. (402,146.5) .. controls (402,149.81) and (399.31,152.5) .. (396,152.5) .. controls (392.69,152.5) and (390,149.81) .. (390,146.5) -- cycle ;
%Shape: Circle [id:dp06809182280290682] 
\draw  [fill={rgb, 255:red, 208; green, 2; blue, 27 }  ,fill opacity=1 ] (324.5,81) .. controls (324.5,77.69) and (327.19,75) .. (330.5,75) .. controls (333.81,75) and (336.5,77.69) .. (336.5,81) .. controls (336.5,84.31) and (333.81,87) .. (330.5,87) .. controls (327.19,87) and (324.5,84.31) .. (324.5,81) -- cycle ;
%Shape: Circle [id:dp5608850217102097] 
\draw  [fill={rgb, 255:red, 126; green, 211; blue, 33 }  ,fill opacity=1 ] (274.18,111.18) .. controls (274.18,107.87) and (276.87,105.18) .. (280.18,105.18) .. controls (283.5,105.18) and (286.18,107.87) .. (286.18,111.18) .. controls (286.18,114.5) and (283.5,117.18) .. (280.18,117.18) .. controls (276.87,117.18) and (274.18,114.5) .. (274.18,111.18) -- cycle ;
%Shape: Circle [id:dp4094213520446309] 
\draw  [fill={rgb, 255:red, 126; green, 211; blue, 33 }  ,fill opacity=1 ] (289.18,197.82) .. controls (289.18,194.5) and (291.87,191.82) .. (295.18,191.82) .. controls (298.5,191.82) and (301.18,194.5) .. (301.18,197.82) .. controls (301.18,201.13) and (298.5,203.82) .. (295.18,203.82) .. controls (291.87,203.82) and (289.18,201.13) .. (289.18,197.82) -- cycle ;
%Shape: Circle [id:dp13994880098846596] 
\draw  [fill={rgb, 255:red, 126; green, 211; blue, 33 }  ,fill opacity=1 ] (375.82,182.82) .. controls (375.82,179.5) and (378.5,176.82) .. (381.82,176.82) .. controls (385.13,176.82) and (387.82,179.5) .. (387.82,182.82) .. controls (387.82,186.13) and (385.13,188.82) .. (381.82,188.82) .. controls (378.5,188.82) and (375.82,186.13) .. (375.82,182.82) -- cycle ;
%Shape: Circle [id:dp8175278935370389] 
\draw  [fill={rgb, 255:red, 126; green, 211; blue, 33 }  ,fill opacity=1 ] (360.82,96.18) .. controls (360.82,92.87) and (363.5,90.18) .. (366.82,90.18) .. controls (370.13,90.18) and (372.82,92.87) .. (372.82,96.18) .. controls (372.82,99.5) and (370.13,102.18) .. (366.82,102.18) .. controls (363.5,102.18) and (360.82,99.5) .. (360.82,96.18) -- cycle ;
%Straight Lines [id:da1336979283095251] 
\draw [color={rgb, 255:red, 74; green, 144; blue, 226 }  ,draw opacity=1 ][line width=2.25]  [dash pattern={on 6.75pt off 4.5pt}]  (280.18,111.18) -- (330.5,81) ;
%Straight Lines [id:da9856735611138749] 
\draw [color={rgb, 255:red, 74; green, 144; blue, 226 }  ,draw opacity=1 ][line width=2.25]  [dash pattern={on 6.75pt off 4.5pt}]  (265,146.5) -- (295.18,197.82) ;
%Straight Lines [id:da6508316946938765] 
\draw [color={rgb, 255:red, 74; green, 144; blue, 226 }  ,draw opacity=1 ][line width=2.25]  [dash pattern={on 6.75pt off 4.5pt}]  (396,146.5) -- (366.82,96.18) ;
%Straight Lines [id:da16668406222239884] 
\draw [color={rgb, 255:red, 74; green, 144; blue, 226 }  ,draw opacity=1 ][line width=2.25]  [dash pattern={on 6.75pt off 4.5pt}]  (330.5,212) -- (381.82,182.82) ;

\end{tikzpicture}
    }
    \caption{Counter-example showing the existence of stationary sub-optimal matchings. The data points (red) are placed at every other vertex of a regular polygon, the noise points (green) are slightly offset in a counterclockwise fashion from the vertices of the polygon. The optimal assignment matches each data points to the following noise point going in the clockwise direction. The assignment represented in blue is sub-optimal but it is stationary for any batch size $m < n$. }
    \label{fig:counter}
\end{figure}

Although this theorem shows that \methodName converges to a stationary solution, it does not guarantee global optimality. However, as already mentioned in \secref{sec:lbm} of the main paper, this is the case for all minibatch-based methods. We exhibit the counter-example of \citet{xie2024randomized} (see \figref{fig:counter}) showing that any sub-problem-based approach would fail to recover the optimal coupling as a full cyclic permutation of all the points is necessary to solve the problem. Despite this, the coupling induced by \methodName is more optimal than that of minibatch OT-based methods and leads to straighter sampling trajectories, as evidenced by the experiments on real data.

\section{Implementation Details}\label{app:impl}
This section includes the implementation details of our algorithm, including model architectures, training parameters, etc.

In practice, to be comparable with the prior work, we parametrize the learned vector field with a neural network that has an improved UNet~\citep{ronneberger2015u} architecture (ADM) from \citet{dhariwal2021diffusion}. To make fair comparisons, we used the same model architectures and training parameters as in the prior work, when possible. See Table~\ref{tab:adm_params} for the network architecture configurations and training parameters. All models (except for the ablations) were trained on 4 Nvidia RTX 3090 GPUs. For the latent space models we used the pretrained autoencoder from Stable Diffusion~\citep{rombach2022high} provided at the Hugging Face model hub\footnote{\url{https://huggingface.co/stabilityai/sd-vae-ft-mse-original/blob/main/vae-ft-mse-840000-ema-pruned.ckpt}}. The training code can be found at \url{https://github.com/araachie/loom-cfm}.

\begin{table}[t]
    \centering
    \caption{ADM network architecture and training parameters of \methodName for each model.}\label{tab:adm_params}
    \begin{tabular}{lcccc}
         \toprule
         & CIFAR10 & ImageNet-32 & ImageNet-64 & FFHQ256 \\
         \hline
         Input shape & [3, 32, 32] & [3, 32, 32] & [3, 64, 64] & [4, 32, 32] \\
         Channels & 128 & 128 & 192 & 256 \\
         Number of Res blocks & 2 & 3 & 2 & 2 \\
         Channels multipliers & [1, 2, 2, 2] & [1, 2, 2, 2] & [1, 2, 3, 4] & [1, 2, 3, 4] \\
         Heads & 4 & 4 & 4 & 4 \\
         Heads channels & 64 & 64 & 64 & 64 \\
         Attention resolution & [16] & [16, 8] & [16] & [16, 8, 4] \\
         Dropout & 0.1 & 0.1 & 0.1 & 0.1 \\
         \hline
         Effective batch size & 128 & 512 & 96 & 128 \\
         GPUs & 4 & 4 & 4 & 4 \\
         Epochs &  1000 & 200 & 100 & 500 \\
         Iterations & 391k & 500k & 1334k & 273k \\
         Learning rate & 0.0002 & 0.0001 & 0.0001 & 0.00002 \\
         Learning rate scheduler & Constant & Constant & Constant & Constant \\
         Warmup steps & 5k & 20k & 20k & 3.5k \\
         EMA decay & 0.9999 & 0.9999 & 0.9999 & 0.9999 \\
         Training time (hours) & 17.3 & 73.5 & 190.6 & 66.8 \\
         \hline
         CFM $\sigma$ & 1e-7 & 1e-7 & 1e-7 & 1e-7 \\
         Number of noise caches & 4 & 1 & 1 & 4 \\
         \bottomrule
    \end{tabular}
\end{table}

\section{Quantitative Results}\label{app:detailed_results}
In this section, we provide more quantitative evaluations of our method.

\noindent\textbf{Ablations.} For a 
better visual perception, the results of ablations were presented as plots in the main paper. Here we include the numbers used to build those plots (see Table~\ref{tab:cifar_ablations}).

\noindent\textbf{Convergence.} Throughout the training we log the number of reassignments per minibatch. We report those in \figref{fig:num_swaps_bs} for different batch sizes and in \figref{fig:num_swaps_nc} for different number of noise caches (mimicking different dataset sizes). Additionally, we also report the minibatch OT cost around the training time where the reassignments are quite rare (\twofigref{fig:cost_bs}{fig:cost_nc}). It can be seen that \methodName with different batch sizes despite starting at different number of reassignments (around the batch size, since the intial assignments are random) converges roughly in the same amount of time (see \figref{fig:num_swaps_bs}). However, with larger batch size the method converges to a better matching, as can be inferred from the minibatch OT cost (see \figref{fig:cost_bs}). At the same time, increasing the dataset size leads to slower convergence, since the algorithm has to visit more minibatches to sort out the assignments (see \figref{fig:num_swaps_nc}). And the minibatch OT cost behaves accordingly (see \figref{fig:cost_nc}).

\noindent\textbf{Training time.} As for the time complexity, \methodName is comparable to OT-CFM~\citep{tong2023conditional} or BatchOT~\citep{pooladian2023multisample} as it solves the same minibatch matching problem as the prior methods. However, saving and loading the current assignments from the disk introduces a small i/o overhead. Hence, the training time of \methodName is slightly longer than that of the baselines. For instance, training for 1000 epochs on CIFAR10 with batch size 128 scattered across 4 Nvidia RTX 3090 GPUs takes 17.3 hours with \methodName compared to 13.9 hours when the assignments are not stored.

\begin{minipage}{.5\linewidth}%
    \centering
    \footnotesize
    \captionof{table}{Detailed ablation results on CIFAR10.}
    \label{tab:cifar_ablations}
    \begin{tabular}{ccr}
        \toprule
        Solver & NFE & FID$\downarrow$ \\
        \hline
        \multicolumn{3}{c}{\textit{Fixed source, w/o reassignments}} \\
        \hline
        midpoint & 4 & 22.57 \\
        midpoint & 8 & 7.61 \\
        midpoint & 12 & 6.17 \\
        dopri5 & 142 & 5.98 \\
        % w/o reassignments, fixed source & midpoint & 4 & 22.31 \\
        % w/o reassignments, fixed source & midpoint & 8 & 7.22 \\
        % w/o reassignments, fixed source & midpoint & 12 & 5.93 \\
        % w/o reassignments, fixed source & dopri5 & 142 & 5.73 \\
        \hline
        \multicolumn{3}{c}{\textit{W/o saving local reassignments}} \\
        \hline
        midpoint & 4 & 21.47 \\ % 4 & 22.02 \\
        midpoint & 8 & 8.69 \\ % 8 & 9.27 \\
        midpoint & 12 & 6.56 \\ % 12 & 7.26 \\
        dopri5 & 131 & 4.52 \\ % 136 & 5.26 \\
        % w/o saving reassignments, new source & midpoint & 4 & 21.25 \\ % 4 & 22.02 \\
        % w/o saving reassignments, new source & midpoint & 8 & 8.30 \\ % 8 & 9.27 \\
        % w/o saving reassignments, new source & midpoint & 12 & 6.21 \\ % 12 & 7.26 \\
        % w/o saving reassignments, new source & dopri5 & 131 & 4.27 \\ % 136 & 5.26 \\
        \hline
        \multicolumn{3}{c}{\textit{Train after convergence}} \\
        \hline
        midpoint & 4 & 12.62 \\ % 4 & 22.02 \\
        midpoint & 8 & 6.45 \\ % 8 & 9.27 \\
        midpoint & 12 & 5.53 \\ % 12 & 7.26 \\
        dopri5 & 139 & 5.07 \\ % 136 & 5.26 \\
        % train after & midpoint & 4 & 12.36 \\ % 4 & 22.02 \\
        % train after & midpoint & 8 & 6.11 \\ % 8 & 9.27 \\
        % train after & midpoint & 12 & 5.21 \\ % 12 & 7.26 \\
        % train after & dopri5 & 139 & 4.73 \\ % 136 & 5.26 \\
        \hline
        % w/o saving reassignments, new source, bs 32 & midpoint & 4 & 26.23 \\
        % w/o saving reassignments, new source, bs 32 & midpoint & 8 & 10.89 \\
        % w/o saving reassignments, new source, bs 32 & midpoint & 12 & 8.14 \\
        % w/o saving reassignments, new source, bs 32 & dopri5 & 131 & 5.34 \\
        % w/o saving reassignments, new source, bs 64 & midpoint & 4 & 29.05 \\
        % w/o saving reassignments, new source, bs 64 & midpoint & 8 & 11.65 \\
        % w/o saving reassignments, new source, bs 64 & midpoint & 12 & 8.81 \\
        % w/o saving reassignments, new source, bs 64 & dopri5 & 123 & 5.72 \\
        % 1 cache & midpoint & 4 & 23.53 \\
        % 1 cache & midpoint & 8 & 28.51 \\
        % 1 cache & midpoint & 12 & 30.27 \\
        % 1 cache & dopri5 & - & - \\
        \multicolumn{3}{c}{\textit{2 caches, bs 128}} \\
        \hline
        midpoint & 4 & 12.28 \\
        midpoint & 8 & 9.93 \\
        midpoint & 12 & 9.81 \\
        dopri5 & 126 & 9.75 \\
        % 2 caches & midpoint & 4 & 11.86 \\
        % 2 caches & midpoint & 8 & 9.70 \\
        % 2 caches & midpoint & 12 & 9.50 \\
        % 2 caches & dopri5 & 126 & 9.51 \\
        \hline
        \multicolumn{3}{c}{\textit{3 caches, bs 128}} \\
        \hline
        midpoint & 4 & 11.30 \\
        midpoint & 8 & 6.40 \\
        midpoint & 12 & 5.92 \\
        dopri5 & 134 & 5.82 \\ % 138 & 6.38 \\
        % 3 caches & midpoint & 4 & 10.98 \\
        % 3 caches & midpoint & 8 & 6.21 \\
        % 3 caches & midpoint & 12 & 5.57 \\
        % 3 caches & dopri5 & 138 & 5.51 \\ % 138 & 6.38 \\
        \hline
        \multicolumn{3}{c}{\textit{4 caches, bs 128}} \\
        \hline
        midpoint & 4 & 11.60 \\ % & 4 & 12.17 \\
        midpoint & 8 & 5.38 \\ % 8 & 6.16 \\
        midpoint & 12 & 4.60 \\ % 12 & 5.42 \\
        dopri5 & 134 & 4.41 \\
        % 4 caches, bs 128 & midpoint & 4 & 11.06 \\ % & 4 & 12.17 \\
        % 4 caches, bs 128 & midpoint & 8 & 4.95 \\ % 8 & 6.16 \\
        % 4 caches, bs 128 & midpoint & 12 & 4.29 \\ % 12 & 5.42 \\
        % 4 caches, bs 128 & dopri5 & 134 & 4.17 \\ % 135 & 5.23 \\
        \hline
        \multicolumn{3}{c}{\textit{4 caches, bs 64}} \\
        \hline
        midpoint & 4 & 19.39 \\
        midpoint & 8 & 8.85 \\
        midpoint & 12 & 6.68 \\
        dopri5 & 134 & 4.51 \\
        % 4 caches, bs 64 & midpoint & 4 & 18.61 \\
        % 4 caches, bs 64 & midpoint & 8 & 8.51 \\
        % 4 caches, bs 64 & midpoint & 12 & 6.37 \\
        % 4 caches, bs 64 & dopri5 & 134 & 4.17 \\
        \hline
        \multicolumn{3}{c}{\textit{4 caches, bs 32}} \\
        \hline
        midpoint & 4 & 20.83 \\
        midpoint & 8 & 9.56 \\
        midpoint & 12 & 7.40 \\
        dopri5 & 122 & 5.02 \\
        % 4 caches, bs 32 & midpoint & 4 & 20.63 \\
        % 4 caches, bs 32 & midpoint & 8 & 9.16 \\
        % 4 caches, bs 32 & midpoint & 12 & 7.01 \\
        % 4 caches, bs 32 & dopri5 & 122 & 4.61 \\
        \bottomrule
    \end{tabular}
\end{minipage}%
\hfill%
\begin{minipage}{.5\linewidth}
    \centering
    \includegraphics[width=\linewidth]{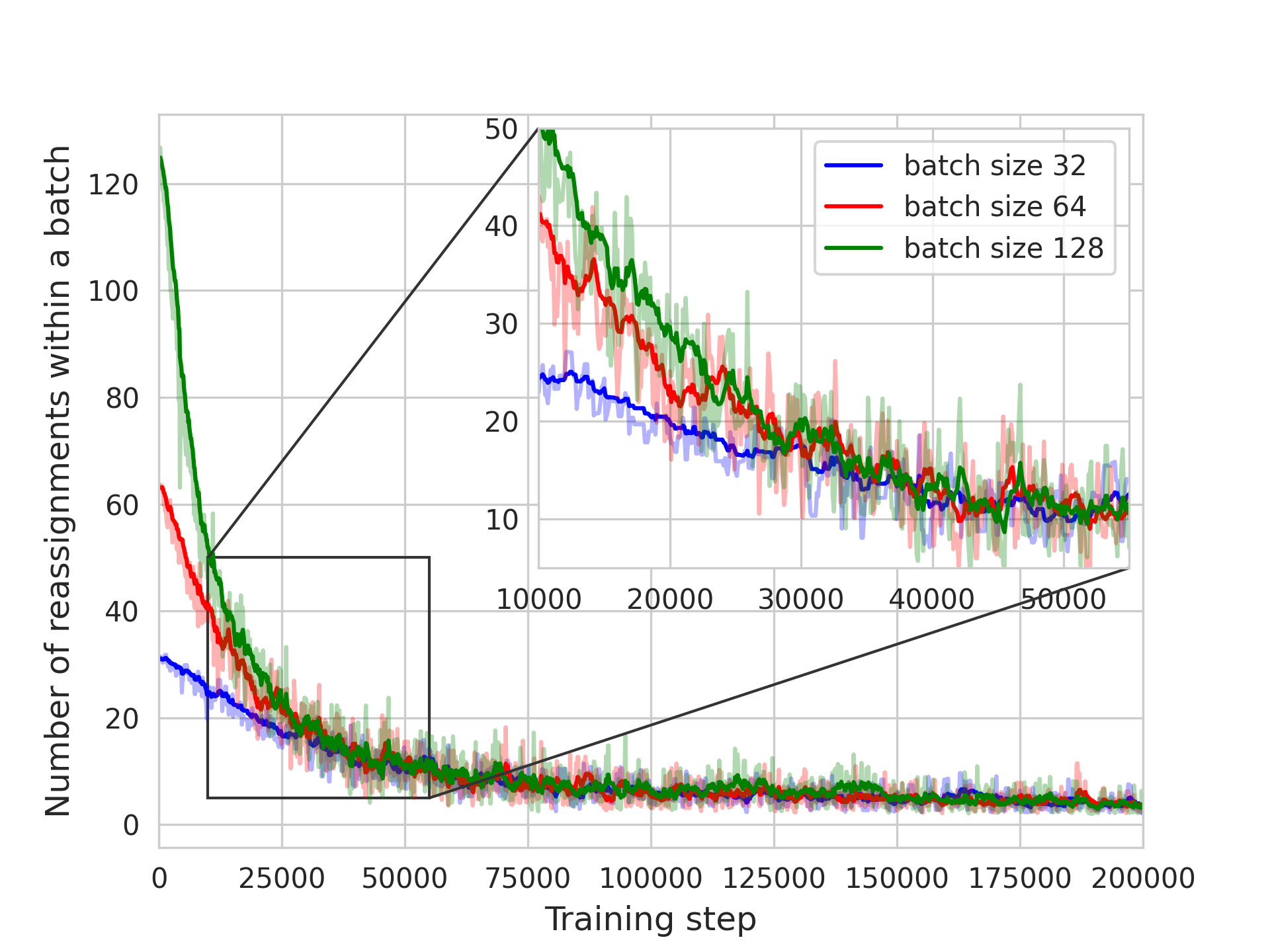}
    \captionof{figure}{Number of reassignments within a minibatch along the training progress depending on the batch size.}
    \label{fig:num_swaps_bs}
    \includegraphics[width=\linewidth]{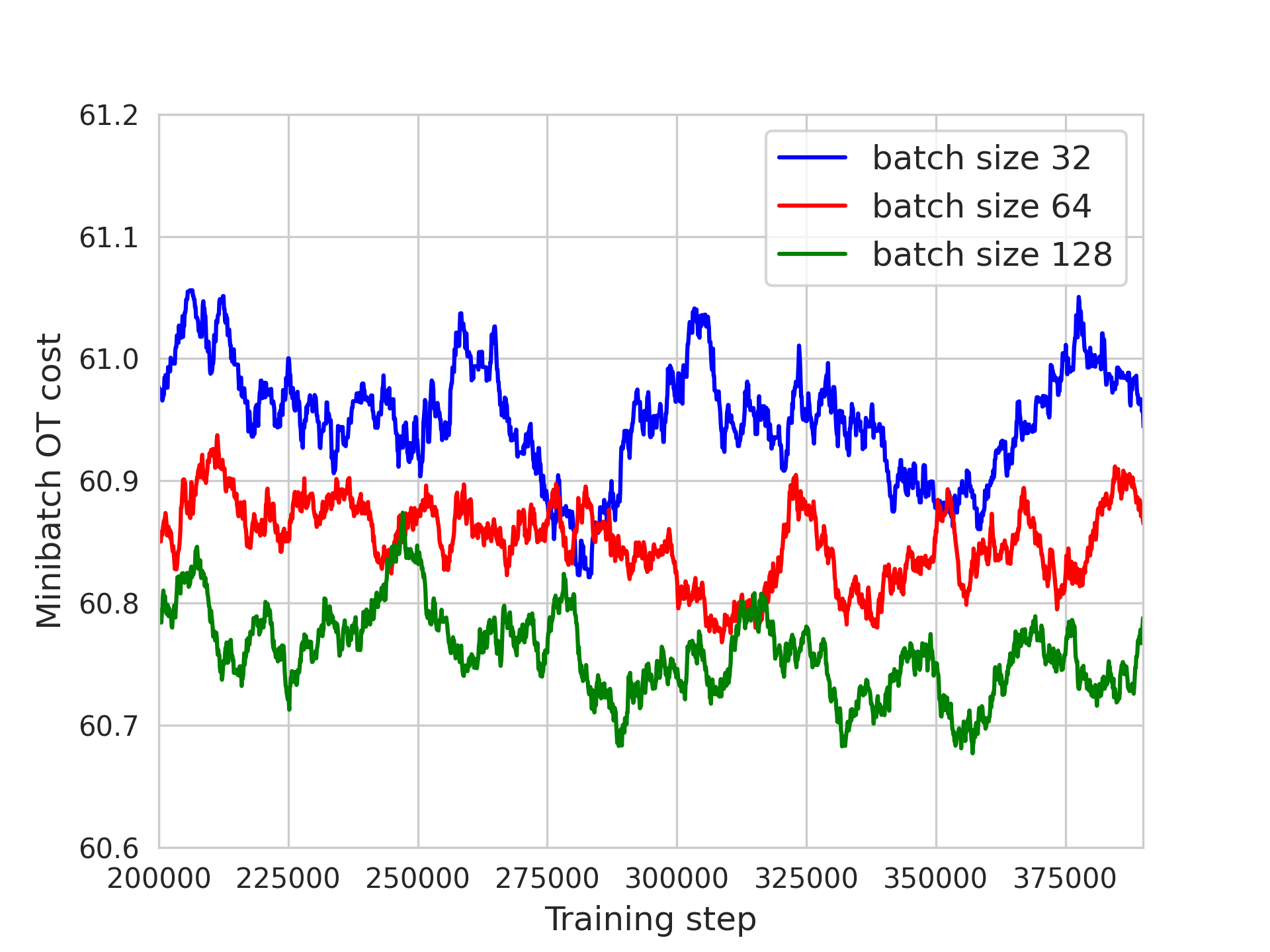}
    \captionof{figure}{OT cost per minibatch along the training progress depending on the batch size. The lines corresponding to the exponential moving average of the cost are shown.}
    \label{fig:cost_bs}
\end{minipage}

\begin{minipage}{.49\linewidth}
    \centering
    \includegraphics[width=\linewidth]{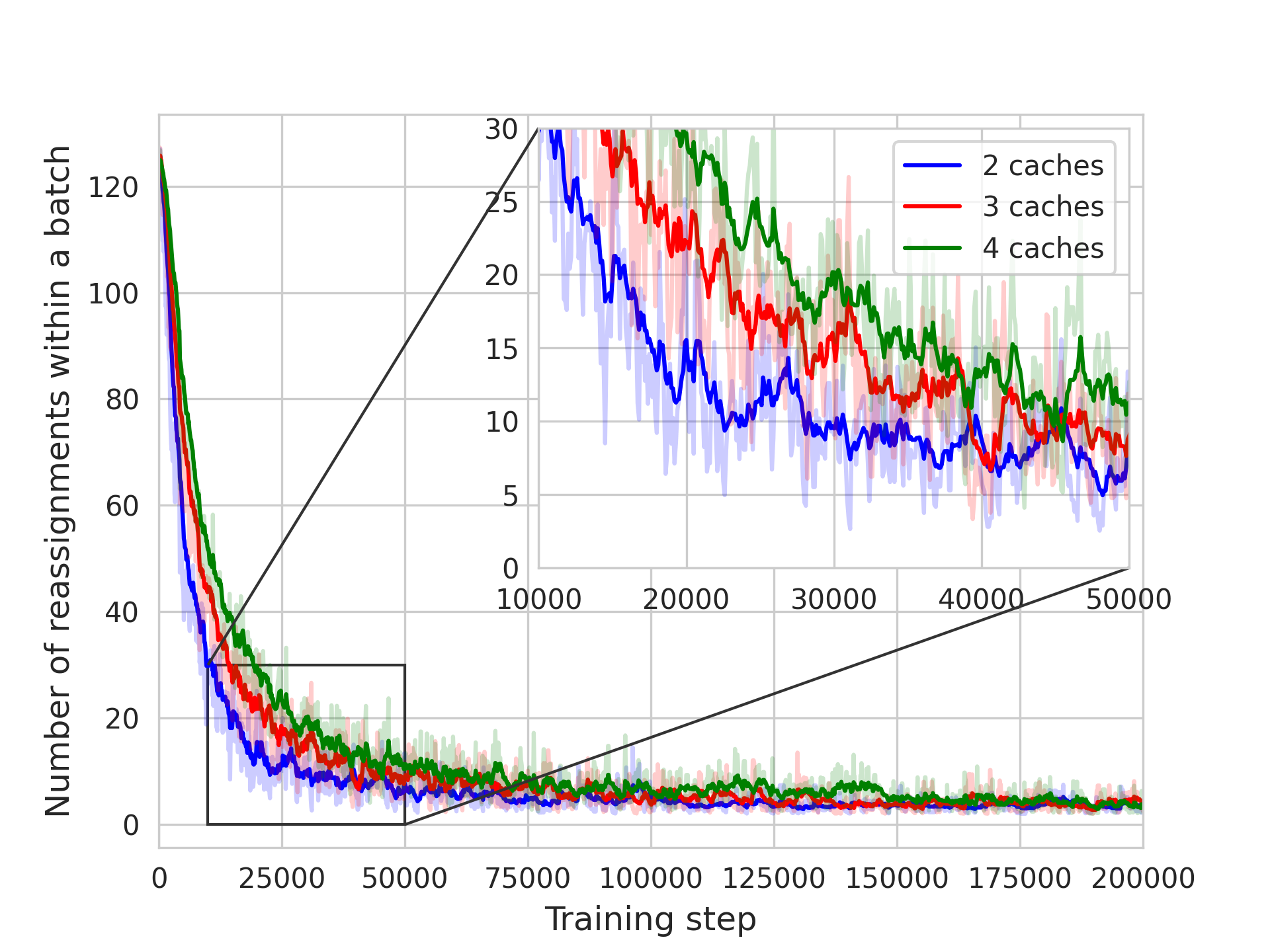}
    \captionof{figure}{Number of reassignments within a minibatch along the training progress depending on the number of noise caches (the size of the dataset). The lines corresponding to the exponential moving average of the cost are shown.}
    \label{fig:num_swaps_nc}
\end{minipage}%
\hfill%
\begin{minipage}{.49\linewidth}
    \centering
    \includegraphics[width=\linewidth]{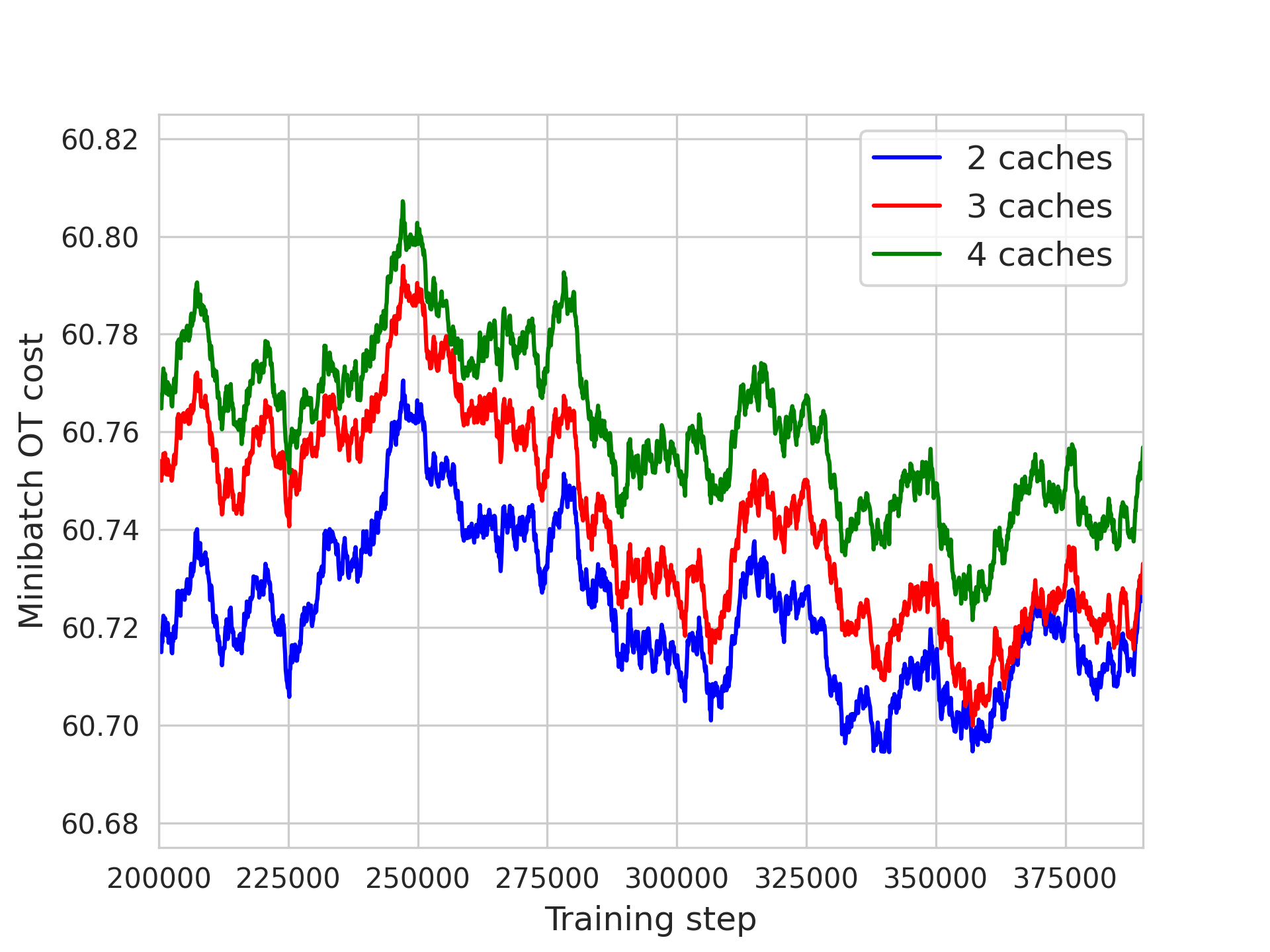}
    \captionof{figure}{OT cost per minibatch along the training progress depending on the number of noise caches (the size of the dataset). The lines corresponding to the exponential moving average of the cost are shown.}
    \label{fig:cost_nc}
\end{minipage}

\section{Qualitative Results}
\label{app:qualitative_results}
In this section, we provide some qualitative results that could not be included in the main paper due to the page number limit. 

\noindent\textbf{Sampling Paths.}
We provide examples of sampling trajectories from our models. Starting from a randomly sampled noise instance at $t = 0$, our models iteratively denoise those via numerically integrating the ODE in \eqref{eq:ode} to obtain clean images. In \quadfigref{fig:st_cifar}{fig:st_in32}{fig:st_in64}{fig:st_ffhq} we show intermediate points visited by the solver at $t \leq 1$. For the model trained on CIFAR10, we also compare the sampling paths to the version of the model that does not store reassignments. For FFHQ, in \figref{fig:st_nfe_ffhq} we additionally provide samples using different number of steps in the ODE solver, resulting in different NFEs.

\noindent\textbf{Unconditional Generation.} Lastly, we provide uncurated samples from our models in \quadfigref{fig:unc_cifar}{fig:unc_in32}{fig:unc_in64}{fig:unc_ffhq}.

\begin{figure}[t]
    \centering
    \begin{tabular}{c}
         \includegraphics[width=0.8\linewidth]{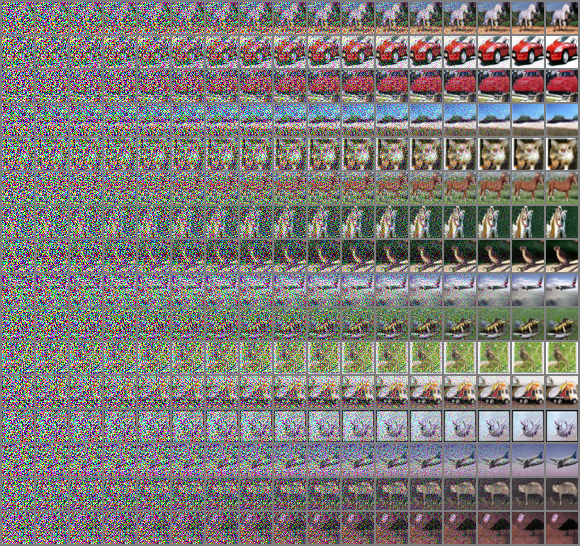} \\
         \includegraphics[width=0.8\linewidth]{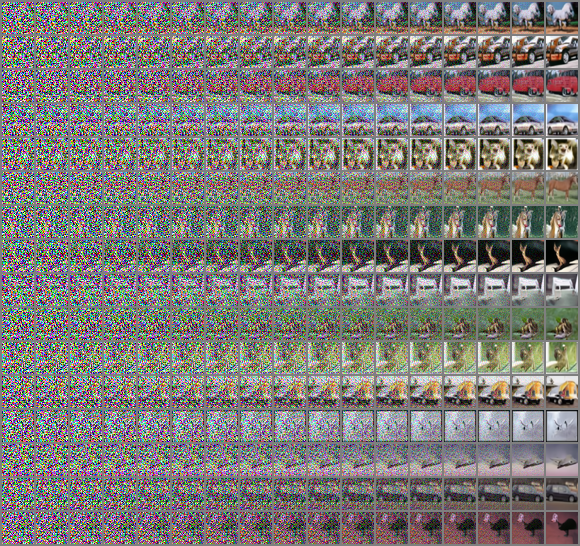}
    \end{tabular}
    \caption{Sampling trajectories of \methodName (top) and the version of the algorithm that does not save the reassignments (bottom) trained on CIFAR10. The $i$-th row in both grids starts with the same initial noise. \methodName tends to produce sharper genereations and converges earlier.}
    \label{fig:st_cifar}
\end{figure}

\begin{figure}[t]
    \centering
    \includegraphics[width=0.8\linewidth]{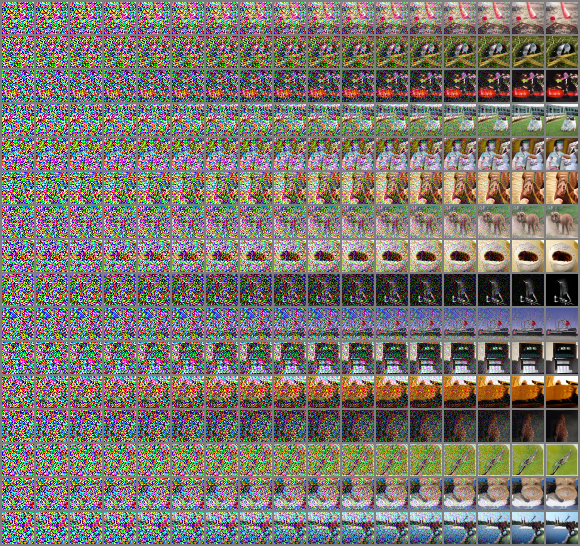}
    \caption{Sampling trajectories of the model trained on ImageNet-32.}
    \label{fig:st_in32}
\end{figure}

\begin{figure}[t]
    \centering
    \includegraphics[width=0.8\linewidth]{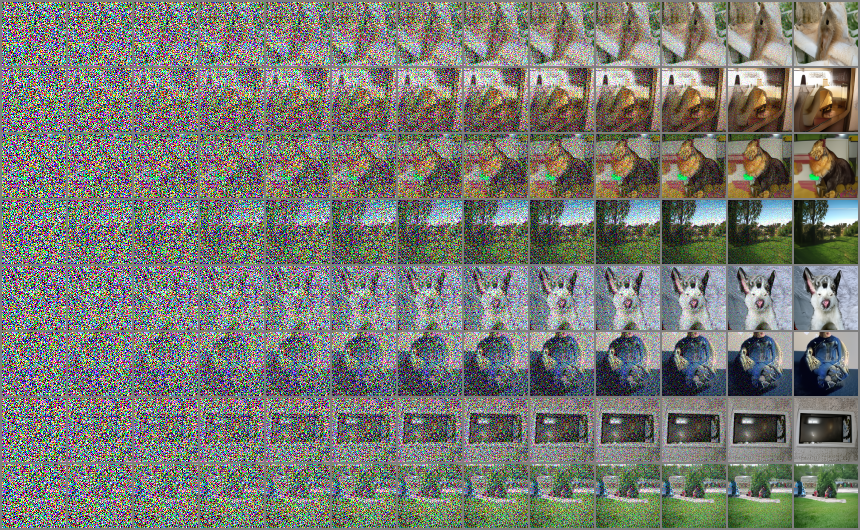}
    \caption{Sampling trajectories of the model trained on ImageNet-64.}
    \label{fig:st_in64}
\end{figure}

\begin{figure}[t]
    \centering
    \includegraphics[width=0.7\linewidth]{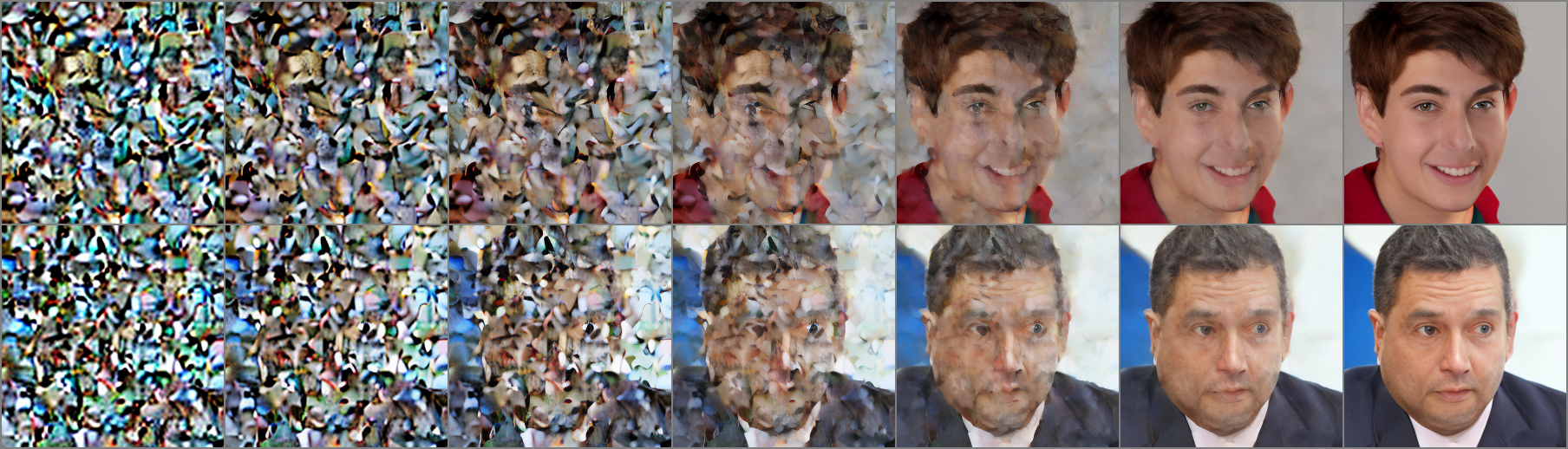}
    \caption{Sampling trajectories of the model trained on FFHQ-256.}
    \label{fig:st_ffhq}
\end{figure}

\begin{figure}[t]
    \centering
    \includegraphics[width=0.7\linewidth]{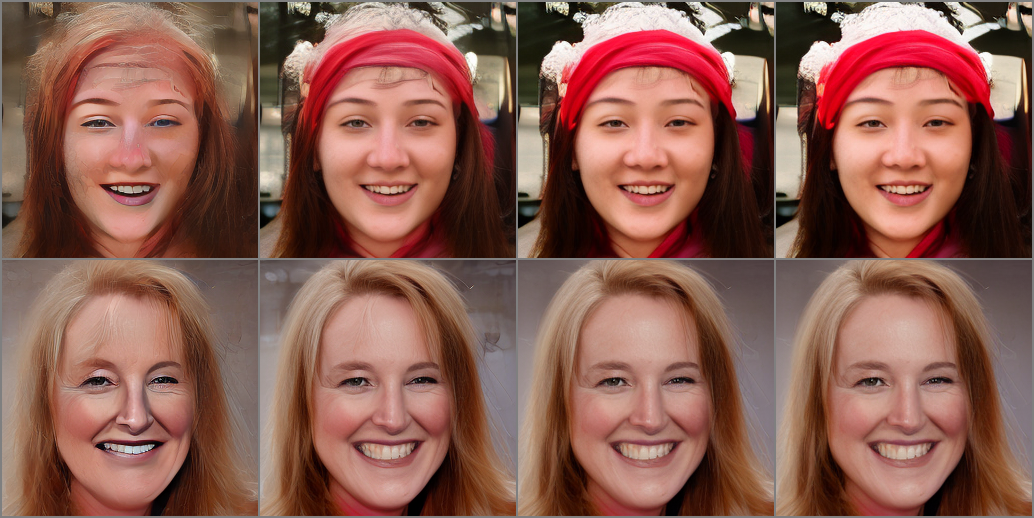}
    \caption{Samples from the FFHQ-256 model with different NFE (from left to right: 2, 4, 8 and 12 function evaluations).}
    \label{fig:st_nfe_ffhq}
\end{figure}

\begin{figure}[t]
    \centering
    \includegraphics[width=0.7\linewidth]{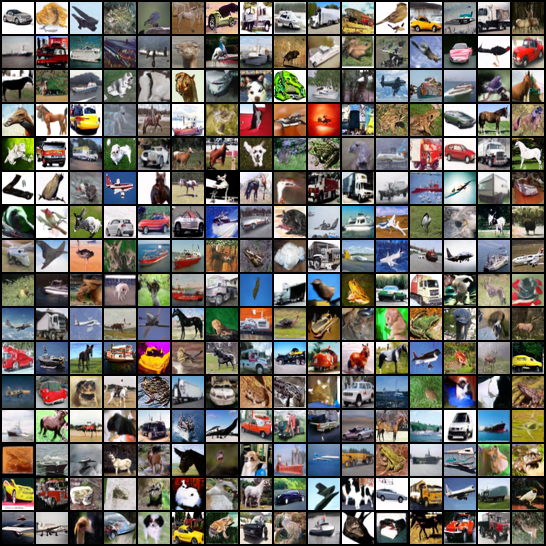}
    \caption{Uncurated samples from the model trained on CIFAR10.}
    \label{fig:unc_cifar}
\end{figure}

\begin{figure}[t]
    \centering
    \includegraphics[width=0.8\linewidth]{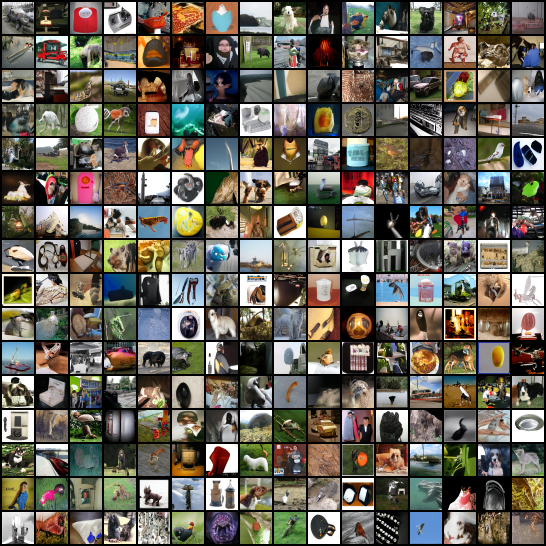}
    \caption{Uncurated samples from the model trained on ImageNet-32.}
    \label{fig:unc_in32}
\end{figure}

\begin{figure}[t]
    \centering
    \includegraphics[width=0.8\linewidth]{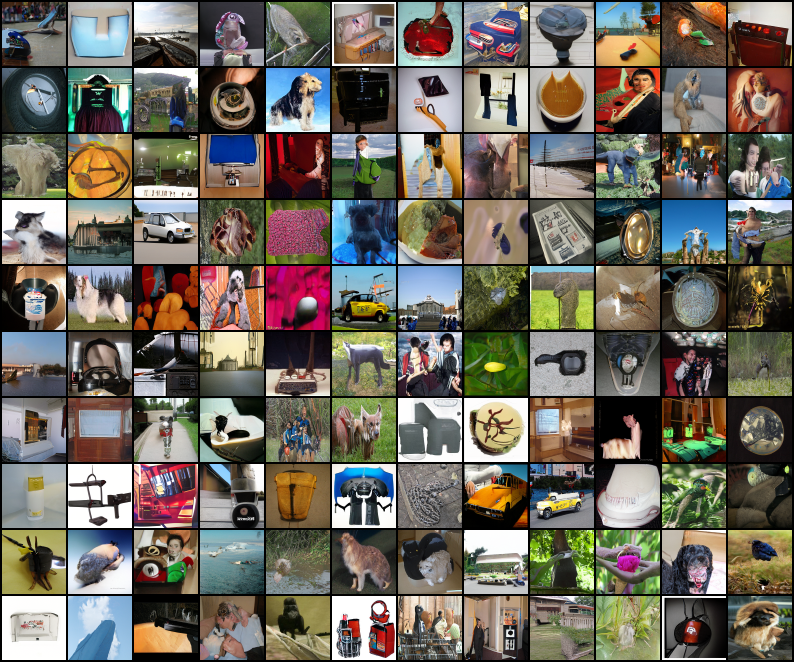}
    \caption{Uncurated samples from the model trained on ImageNet-64.}
    \label{fig:unc_in64}
\end{figure}

\begin{figure}[t]
    \centering
    \includegraphics[width=0.8\linewidth]{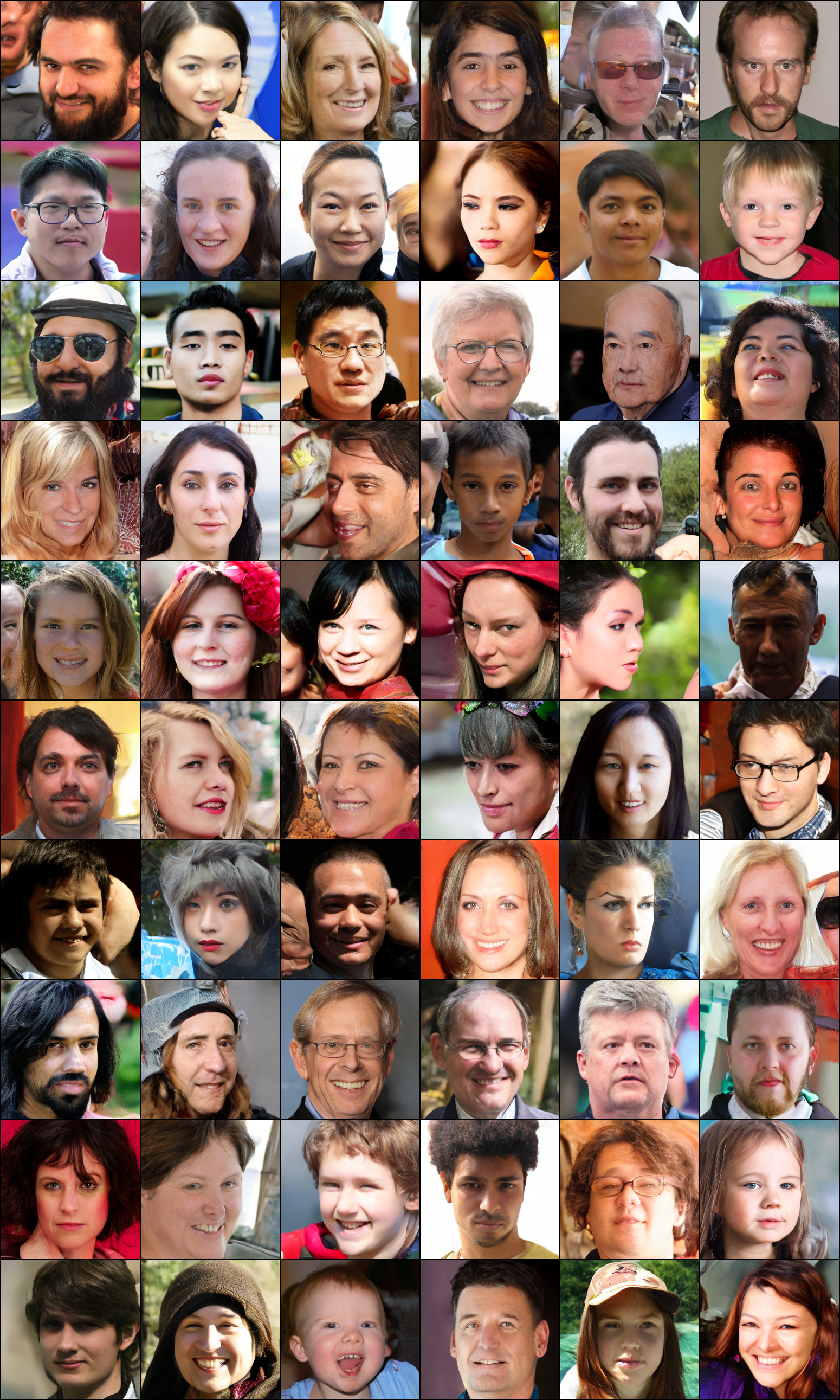}
    \caption{Uncurated samples from the model trained on FFHQ-256.}
    \label{fig:unc_ffhq}
\end{figure}

\section{Alternative Approaches}
\label{app:alternative_approaches}
In the early stages of developing \methodName, we explored various approaches to prevent overfitting to a fixed set of noise samples. Although none of these approaches yielded satisfactory outcomes, we summarize them here for clarity and completeness.

\begin{enumerate}
    \item \textbf{Completely refreshing the noise.} We experimented with replacing all the noise samples every $N$ epochs. While intuitive, this approach caused training instability since the newly assigned noise samples did not necessarily align with the previous ones. Consequently, the network’s targets could shift significantly when the noise was refreshed, impairing convergence.
    \item \textbf{Gradual noise injection.} In this approach, we introduced a hyperparameter, $\phi$, to control the noise refreshing. For each data point in a minibatch, the assigned noise was replaced with a new sample with probability $\phi$. Although this method allowed for a smoother refresh, choosing an optimal $\phi$ proved to be challenging. For example, setting $\phi = 0.1$ caused approximately one third of each minibatch to be reshuffled, which weakened the effect of LOOM-CFM. In contrast, a lower value of $\phi=0.01$ was insufficient to prevent overfitting.
    \item \textbf{Interpolation between LOOM-CFM and independent coupling.} For each data point, we coupled it with its assigned noise with probability $\phi$ and with freshly sampled noise with probability $1 - \phi$. Unlike approach 2, the new noise did not replace the cached noise. We found this technique to be quite effective and the results indeed interpolated the results of LOOM-CFM and the independent coupling. We also explored making $\phi$ a function of $t$, as the curvature of sampling paths depends on $t$ (as shown in \figref{fig:abl_curv}). Unfortunately, this hasn't lead to substantial improvements. Nevertheless, we found this approach interesting for its generality as it can interpolate any pair of coupling techniques. We leave further exploration of this method for future work.
\end{enumerate}

In contrast to these more complex methods, the approach with multiple noise caches in \methodName is straightforward, as it artificially increases the dataset size and equalizes the settings for problems with small and large dataset sizes.

\section{Limitations}
\label{app:limitations}
One limitation of \methodName and other OT-based methods is that they are not directly compatible with conditional generation, especially when the conditioning signal is complex.  In any conditional setup, the marginals of all label-conditional probability paths at $t = 0$ must match the source distribution. However, a naive implementation of any coupling-based method, by conditioning the model without adapting the couplings, may introduce sampling bias, as certain labels could disproportionately align with specific regions in the noise space. Although techniques like classifier(-free) guidance~\citep{ho2022classifier} could be adapted, we leave the exploration of those extensions to future work.

\end{document}